%% file: main_arxiv.tex
\documentclass[letterpaper,twocolumn,10pt]{article}
\usepackage{usenix-2020-09}

\usepackage{balance}
\usepackage{graphicx} 
\usepackage{xcolor}
\usepackage[numbers, sort&compress]{natbib}
\usepackage{amsmath,amsthm,amsfonts}
\usepackage{soul}
\usepackage{enumitem}
\usepackage{thmtools, thm-restate}
\usepackage{color-edits}
\addauthor{conditional}{blue}  

\makeatletter
\renewcommand\paragraph{\@startsection{paragraph}{4}{\z@}%
  {1.5ex \@plus1ex \@minus.2ex}
  {-0.4em}
  {\normalfont\normalsize\bfseries}}
\makeatother

\begin{document}
\date{}

\title{Quota Marketplace: Dynamic Pricing for Efficient Allocation of ML Training Resources}
  
\author{
  {\rm Balasubramanian Sivan}$^{*,1}$ \ 
  {\rm Renato Paes Leme}$^{*,1}$ \ 
  {\rm Mihai Tiuca}$^{*,1}$ \ 
  {\rm Ian McFarlane}$^{1}$ \ 
  {\rm Vasilis Gkatzelis}$^{1,2}$\\
  {\rm Nehal Mehta}$^{1}$ \ 
  {\rm Soheil Hassas Yeganeh}$^{1}$ \ 
  {\rm Vahab Mirrokni}$^{1}$ \ 
  {\rm Amin Vahdat}$^{1}$ \ 
  ({\small $^1$Google},
  {\small $^2$Drexel University}) 
}

\maketitle

\renewcommand{\thefootnote}{\fnsymbol{footnote}}
\footnotetext[1]{Equal contribution.}
\renewcommand{\thefootnote}{\arabic{footnote}}

\begin{abstract}
\input{abstract}
\end{abstract}

\input{intro}
\input{system}

\input{empirical}

\input{challenges}

\input{theory}
\input{future_research}
\input{related}
\balance
\input{acknowledgements}
\clearpage
\bibliographystyle{plainnat}
\bibliography{ref}

\newpage
\appendix

\end{document}

%% file: abstract.tex
The escalating demand for Machine Learning  (ML) training resources in recent years has resulted in a substantial gap between the high demand and the available supply. Efficient allocation of these scarce and expensive resources is crucial for organizations to maximize their return on investment. Existing resource allocation mechanisms, like Karma~\cite{VFACKT23}, are designed to guarantee Pareto efficiency and max-min fairness in settings with dynamic (time-varying) user demands, but fail to preserve these key properties in the presence of demands with heterogeneous values. Given the ubiquity and inevitability of heterogeneity in organizational values of different workloads, effective resource allocation policies must accommodate these variations.

In this paper, we describe the design, implementation, deployment, and theoretical analysis of Quota Marketplace, a market-based mechanism to efficiently allocate ML training chips (like GPUs), explicitly addressing scenarios with demands of heterogeneous value. We detail the implementation of this mechanism within Google and present metrics that demonstrate its impact. We also discuss many business-critical requirements that the Quota Marketplace handles quite effectively, and document the gains and opportunities it has unlocked. We establish theoretically how this market-based approach achieves the essential properties of Pareto efficiency and max-min fairness by allowing the users to express the value of their workloads and enabling dynamic resource pricing based on supply and demand fluctuations. Ultimately, the market facilitates resource allocation that aligns with organizational priorities.

%% file: intro.tex
\section{Introduction}\label{sec:intro}

The demand for Artificial Intelligence (AI) compute has experienced unprecedented growth, increasing by over tenfold annually for eight consecutive years, resulting in a 100 million-fold increase over that 8-year period~\cite{CloudNext25}. Major technology corporations have publicly disclosed substantial capital expenditure (CapEx) budgets, tens of billions of dollars each for 2025, to address this growing demand. This exponential growth in demand creates a widening gap between the demand and supply of AI compute resources, specifically Machine Learning (ML) accelerators like GPUs. Consequently, due to the high cost and limited availability of these resources, organizations must implement return-on-investment (ROI) maximizing, efficient allocation strategies for ML accelerators among individual teams and products. In this paper, we describe Quota Marketplace (QM) -- a market-based system designed, implemented, theoretically analyzed, and fully deployed at Google, to allocate ML chips to teams across business units. The deployed system allocates several hundreds of thousands of ML accelerators for \emph{ML training workloads} across all the business units in Google. The footprint of the Quota Marketplace is a double-digit percentage of the \emph{entire ML fleet of Google}.

We begin with an overview of QM's position in the ML scheduling stack and the notion of pools. A detailed description of QM concepts and implementation follow in Section~\ref{sec:system}.

\paragraph{QM and the ML Training Scheduling Stack in a Nutshell.} ML compute resources (ML accelerators or ML chips), much like non-ML compute resources (CPUs), have traditionally been partitioned into \emph{static} pools. Each pool corresponds to a business unit (BU) with distinct goals, cost accounting, and resources, and the pool administrators/planners explicitly assign accelerators to teams within that BU. The demand assessment and prioritization procedure typically takes place quarterly or semi-annually and it determines the allocation of chips across pools and across teams within a given pool. This assignment then remains more or less static (especially the distribution across pools) until the next assessment. 

\begin{figure}[h!]
    \centering
    \includegraphics[width=1\linewidth]{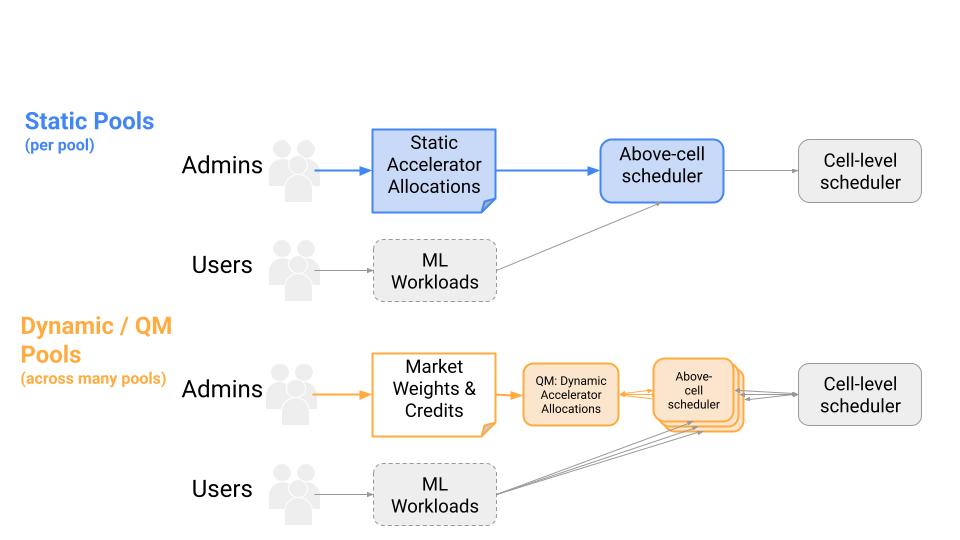}
    \caption{QM in the ML (Training) Scheduling Stack}
    \label{fig:introQM}
\end{figure}

Given the rising prominence of ML training workloads and their need for batch compute (which tolerates preemptions much better than serving workloads)—the Quota Marketplace was conceptualized to make the accelerator allocations both dynamic and agile, and economically more efficient, by incorporating a cross-pool view into allocation decisions. Figure~\ref{fig:introQM} is a short summary of QM's position in the ML scheduling stack. Conceptually, when compared to static pools, there are two main differences  in a \emph{dynamic pool} (a) QM replaces static accelerator allocations with dynamic ones, and (b) QM makes these allocation decisions across all the dynamic pools instead of the per-pool allocation decisions in the static pools. 

In dynamic pools, which is where QM operates, company-level administrators are tasked with allocating market weights to the different pools that reflect the business impact delivered by these units. Per-pool administrators are tasked with allocating credit, which are location agnostic, resource-type agnostic tokens, to the various teams within their business unit, which again reflect the impact the various teams deliver. The toilsome, communication-heavy, quarter-long human-driven negotiations involved in the traditional prioritization exercises of static allocation are now replaced by supply-demand-based resource allocation decisions both across teams in a pool, and also across pools, at a frequency of roughly every 1 minute. Beyond the several orders of magnitude more-responsive resource allocation that QM brings, QM's cross-pool view while clearing supply and demand automatically makes the allocation economically more efficient. 

In a little more detail, QM determines resource allocation by computing clearing prices for each resource type at any given instant. Prices are dynamically chosen to equate \emph{real-time} demand from scheduler queues with the \emph{real-time} supply of the relevant ML accelerator. The market ``clears'' roughly every minute (often faster than that). By comparing real-time team bids generated by an automated bidding algorithm (that QM provides teams with) against these prices, QM decides which teams receive resources and in what quantity. Consequently, quota in pools (aka dynamic pools) is not guaranteed, but is derived from market conditions, making it well-suited for exploratory workloads and tasks with flexible SLAs, as is the case with most ML training workloads.

We now discuss the many benefits and opportunities unlocked by the QuotaMarketplace.

\paragraph{Agility in the Face of Highly Fluctuating Supply and Demand.}
The available supply of ML chips is highly dynamic. New hardware capacity comes online daily, often on an imperfect schedule. Resources are also frequently and unpredictably clawed back, at short notice, for a variety of reasons, including, for example, being able to serve production load due to a sudden surge in traffic. On the other hand, human-driven demand assessment and prioritization procedures, as mentioned earlier, are typically conducted quarterly or semi-annually. To maintain allocation speed, various processes used to rely on readily available resource buffers that remain significantly underutilized and unprioritized. A robust system must respond to both rapid capacity addition and swift clawback/reallocation in the order of minutes, rather than quarters. Doing so would directly contribute to significant reduction in the buffer maintained by production teams, as they could promptly claw back resources upon detecting impending traffic surges. As we make the point with data in Section~\ref{sec:empirical}, this reclaimed ``bonus'' capacity is substantial, sometimes on the order of the total committed capacity, and its underprioritized usage results in significant opportunity cost. In other words, QM results in a big increase in company-prioritized occupancy of chips, rather than opportunistic occupancy of chips (which can be thought of as randomly assigning chips), which we demonstrate with data in Section~\ref{sec:empirical}. That said, QM also increases the overall occupancy of chips (not just company-prioritized occupancy), because it avoids the siloing inherent to static pools (also demonstrated with data in Section~\ref{sec:empirical}). Siloing results in unused capacity, which is often not easily available to use even opportunistically for other pool's workloads.

The demand for ML chips is no less dynamic than their supply. Indeed, deriving the specific ML chip demand for each team directly from the company's business priorities is a challenging task. Our fluctuating usage patterns range from scheduled model training runs to short-duration bursts for demos, exploratory research, and conference submission sprints, exacerbating this problem. Therefore, individuals responsible for resource allocation at a company level (i.e., across all the pools), while knowledgeable about the high-level business goals, often lack granular insight into the time sensitivity, spatial flexibility (i.e., where chips are needed), usage volume, and relative priority of individual requests. Conversely, engineering teams understand the detailed requirements of each workload but lack the broader perspective needed to prioritize across different projects and business units. Therefore, the inherent assumption behind the static pools' prioritization procedures, that we are able to get reasonably accurate predictions of chip requirements for an entire quarter, is highly questionable. Further, regardless of how good our predictions are, a static allocation of chips is very suboptimal in the presence of highly dynamic demand patterns. What's needed is a mechanism to swiftly move allocations in response to real-time demands. QM accomplishes this by enabling continuous, dynamic bidding based on real-time needs.

\paragraph{Enhanced Productivity and Engineering Resource Conservation.} QM significantly reduces the frequency of teams having to negotiate with their BU's pool administrators for more capacity in the middle of a quarter or a half year. In static pools, when there is a demand surge (which, as discussed, is not uncommon), it inevitably leads engineers to interact with the administrators to temporarily get more capacity. It is infeasible to solve this problem by statically allocating chips that meets everyone's \emph{peak} demands. Dynamic allocation is inevitable. In QM, these negotiations are relatively rare, and even when it is necessary, QM makes the administrator's tasks much easier --- simply increase the credit allocation of the concerned team (if their ask is deemed deserving), without concerns for which team to reclaim chips from, to fund this request. Unlike chips and chip hours, credits are (a) not zero-sum and (b) \{resource-type, location\}-agnostic. Therefore increasing one team's credits without altering any other team's credits is perfectly valid.

\paragraph{Incentive Alignment in the Face of Intense Scarcity.}
ML compute resources are very scarce (demand far exceeds supply), and scarcity of resources inevitably drives participants toward strategic behavior. Consequently, a robust resource allocation mechanism must align user incentives with the designer's objectives, such as efficiency and fairness.

To achieve these goals, resource allocation systems should incentivize demand deferral to less congested time periods, geographic locales, or resource-types. A good mechanism should preempt ``use-it-or-lose-it'' scenarios by incentivizing teams to relinquish resources that they do not have a high value for so that they can be consumed by teams with more pressing needs instead. Similarly, given the variety of available resource types that arises from different accelerator generations, the mechanism should incentivize teams to use less congested types of resources.

Finally, the system must provide clear signals that enable users to plan effectively where and when to run their workloads. In QM, prices serve a dual purpose: they provide (a) economic incentives, and (b) a transparent signal of resource congestion and how it varies across time and location.

\paragraph{Guaranteed Efficiency and Fairness.}
When allocating scarce resources, a central systems goal is to achieve efficiency and fairness properties. A fundamental notion of efficiency is Pareto efficiency: an allocation is Pareto efficient if there is no other allocation that would result in (a) weakly higher utility for all users and (b) strictly higher utility for at least one user. A less stringent requirement, directly implied by Pareto efficiency, is to avoid situations where resources are unused despite existing demand. In terms of fairness, the standard goal when allocating computational resources is to achieve (approximate) max-min fairness, which aims to maximize the minimum amount of resources (or utility) any user is allocated. In a large organizational context, this is commonly extended to a weighted max-min notion that normalizes the ``fair share'' for each team based on its potential to positively impact core business priorities. 
In Section~\ref{sec:theory} we define Pareto efficiency and max-min fairness precisely, and show how QM guarantees both. 

\paragraph{Shock Absorber.} QM functions as a shock absorber for the entire corporation, capable of handling significant capacity fluctuations arising from external events. Whether due to the launch of a marquee LLM application that has gained significant traction from customers, requiring hundreds of thousands of accelerators, a high-priority external demo, or surges in cloud customer demand, the system effectively absorbs these shocks without manual intervention. This is not just the earlier point about the agility in handling these requests, but also about distributing the benefits or pains from additions or clawbacks in the fairest way possible according to organizational priority. Shocks can go the other direction as well: QM enables the economically efficient usage of highly fragmented resources added to the system for temporary periods of time, either from serving buffer, or from holding pools.

\paragraph{ Prior Approaches.} Two broad classes of approaches in prior work are ``static allocation'' and ``chip-hour'' mechanisms. 

\emph{Static allocation.} 
This is the idea behind the traditional static pools discussed earlier. Consider a simple setting with a single type of resource, like an ML accelerator, which has to be divided among a number of users. One of the most popular solutions is to statically allocate~\cite{atikoglu2012workload,VMATMC20} this resource by splitting it among the users (or teams) independent of their demands. It effectively creates siloed pools, it gives strong availability isolation, and it upholds fairness. However, crucially, it doesn't satisfy Pareto efficiency because it ignores demands. Max-min fairness~\cite{DRF11,grandl2016altruistic,hong2013achieving,jain2013b4,narayanan2021solving,narayanan2020heterogeneityaware, popa2012faircloud, pu2016fairride, shue2012performance} addresses this problem, and simultaneously achieves Pareto efficiency and fairness, by incorporating demand into decision-making: it maximizes the minimum allocation across all participants, but subject to not allocating more than their demand. But the efficiency guarantees of max-min fairness mechanisms heavily depends on the assumption that user demand is static over time, which is unrealistic for most real world systems~\cite{VFACKT23}. In settings with dynamic demand, static allocations are bound to be far from Pareto efficient.

\emph{Chip-hour mechanisms.} 
Aiming for more efficient allocations in the presence of dynamic user demands, ``chip-hour'' mechanisms determine the amount of access to the resources that each user should be allocated, but provide the users with some flexibility regarding when to use their access rights.
A notable example is the Karma mechanism, introduced by \citet{VFACKT23}, which asks the users to report the amount of resources that they need in each round, and then determines how the resources will be allocated as a function of each user's access history; specifically, this mechanisms strictly prioritizes users who have had the least amount of access to the resources in the past. This elegant mechanism provides incentives for users to request a resource only if they value it, it allocates resources only to users who need them, and it strives to optimize max-min fairness by prioritizing the user with the minimum amount of resource access in the past.
However, despite the very significant improvements that this mechanism achieves, it has an important limitation: it assumes that the user's value for each of these requests is uniform over time and across users, which is quite unrealistic. In fact, as we show in Section~\ref{sec:theory}, the appealing properties of all chip-hour mechanisms break down even for bi-valued instances, where each agent's value for a request can take just one of two values, say \textsc{High} or \textsc{Low}. Different workloads necessarily hold different values in an organizational context, and it is vital for a resource allocation mechanism to accommodate this. Furthermore, chip-hours are still zero-sum quantities, like chips themselves, because they are still tied to a specific resource type at a specific location. Allocating more chip hours to one team necessarily involves reclaiming these chip hours from some other team(s)\footnote{One can resort to oversubscription, but of course that has it limits too, and the core point that chip hours are zero-sum quantities remains the same.}.


\paragraph{Our Contributions.} Our main contributions are:
\begin{enumerate}[leftmargin=13pt]
    \item We describe the design, implementation and deployment of Quota Marketplace in Google. The deployed system has several thousands of daily active users, and several billions of accelerator hours already consumed via the system. 
    \item We provide metrics from the actual deployment of QM.
    \item We provide a theoretical analysis in a simplified single-resource setting which exhibits the shortcomings of the chip-hours mechanisms, especially in the presence of heterogeneous user values (an inevitable characteristic of workloads in realistic settings). We analyze our market-based mechanism in the same setting, and show that it guarantees Pareto efficiency and approximate max-min fairness, even in settings with heterogeneous values.
    \item Finally, we present several avenues for future work, some that require solving new problems, and others that require careful UX design to ensure that the system does not compromise user-friendliness while aiming to be robust to manipulations.  We also describe how we address some of the challenges raised in prior work (e.g.~\cite{SNPASVC05}) regarding the integration of markets and systems.
\end{enumerate}

While several benefits of QM have already been discussed earlier, we describe a few more in the paragraphs below.

\paragraph{The Power of Dynamic Pricing.} The primary shortcoming of prior works in the face of heterogeneity in values is that they don't provide users a way to express their utility (i.e., their value for being allocated the resource at each point in time), so every user request is treated with the same priority. In the presence of heterogeneous values, we instead need a mechanism which allows users to express these values and then allocate the available resources accordingly. This is precisely what the Quota Marketplace achieves. Given that real-time prices for different resource types are published in a company-wide dashboard, deciding to not start a workload when the price exceeds a certain amount is a definitive way to express that one's value for that workload is below a certain price. Further as already discussed, prices provide a clear incentive for users to shift their demands towards cheaper (and, hence, less congested) resources that they still highly value. 

\paragraph{The Benefits of the Credit Abstraction.}
The role of the credits allotted to the users in our market-based mechanism and their use toward dynamic resource pricing, provide some important benefits relative to static allocation and chip-hour mechanisms. While we briefly alluded to these in the earlier discussion in various contexts, we collect them all below in a little more detail.

First, \emph{credits are fully decoupled from physical resources}: indeed, the sum of credits in circulation by our mechanism is not constrained in any way by the number of available physical resources. This property enables the market to effortlessly handle supply additions and clawbacks: \emph{the addition or clawback of hundreds of thousands of chips does not require the market to adjust the agents' credit balance in any way}:
it simply runs the same market clearance algorithm in the next cycle, using the updated supply value when equating supply and demand to compute clearing price. There is no equivalent to this in static allocations or chip-hour mechanisms.

Second, very related to first, but subtly different, \emph{credits are not zero-sum quantities} --- this directly enables the market to reprioritize in minutes. For instance, granting a certain team or business unit additional power simply requires minting more credits and handing it over to that team, or increasing the market weight of that BU. In contrast, \emph{chips or chip-hours are both zero-sum} since the physical resources they are tied to are, of course, zero sum. As a result, reprioritization to grant additional power to one team necessarily involves determining which teams to withdraw supply from.
 
Third, \emph{credits are agnostic to the type, location, or other attributes of each resource, whereas chips and chip-hours are tied to a particular chip and location}. Therefore, when allocating credits, the allocation committee deals with a single-dimensional problem of deciding who gets how much purchasing power. Allocating chips or chip hours, on the other hand, requires a separate resource allocation decision for each chip type and location. Further, it is easy for the allocation committee to measure the return on investment from different teams based on a single-dimensional quantity of credits. Otherwise, the committee has to wrestle with the very hard problem of comparing apples and oranges when evaluating the impact of one team that was allocated one set of chip types in some set of locations relative to another team that received a different set of chip types and locations.

\paragraph{Organization of the Paper.} 
In Section~\ref{sec:system},  we give a formal description of the Quota Marketplace system and its implementation, along with the features in the market that prevent market abuse. Section~\ref{sec:empirical} presents metrics from the deployment of the Quota Marketplace and demonstrates how the desiderata that we have described thus far are satisfied by the market-based mechanism. 
In Section~\ref{sec:theory} we provide our theoretical analysis. In Section~\ref{sec:challenges} we discuss some of the old and new challenges, as well as new opportunities and interesting open problems for future research, and in Section~\ref{sec:related} we conclude with an overview of related work.

%% file: system.tex
\section{Quota Marketplace System}\label{sec:system}
The Quota Marketplace is an iterative resource allocation system based on dynamic pricing using a generated currency of "credits". To describe how this market operates, we first provide some core concepts.

\begin{figure}[h!]
    \centering
    \includegraphics[width=0.75\linewidth]{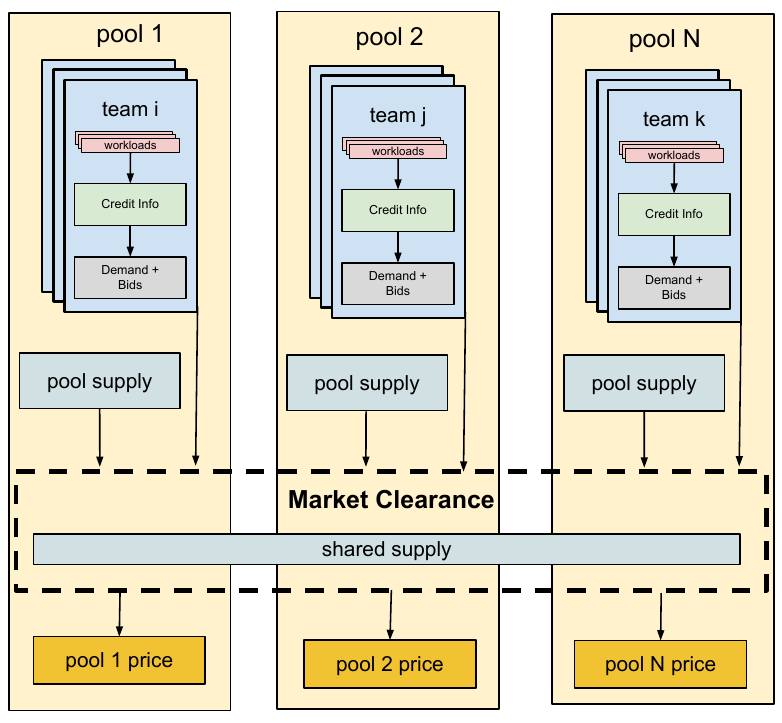}
    \caption{The high-level concepts of Quota Marketplace}
    \label{fig:gqm_system_high_level}
\end{figure}

\paragraph{Supply.} Supply (or resources) of the marketplace correspond to specialized ML-accelerator chips. The chips are distributed across various geographic locations (which we refer to as cells) and are of different generations (which we refer to as resource types or chip types). Each cell has a certain capacity of each chip type, which varies over time.

While the intention of QM is to manage supply shared across all pools, for legacy reasons the system also supports pool-specific supply, which is supply previously purchased and owned by each business unit to which this unit has preferential access. Indeed, Figure~\ref{fig:gqm_system_high_level} shows that market clearance takes into account both pool supply and shared supply.

\paragraph{Teams.} Users are grouped into teams and submit demand to the system in the form of individual workloads. As workloads are submitted, they enter the team priority queue, allowing for relative prioritization.  The team is the fundamental unit within the system to which resources are allocated.

\paragraph{Pools.} Pools are groups of teams representing business units within the larger organization with their own goals, cost accounting, governance, and resources.

\paragraph{Income and Balance.} 
Pool administrators assign credit income to teams to reflect their relative priority in the pool. Income is defined as credits to be deposited into a team's account at a certain rate.
As teams incur charges for resource usage, their credit balance gets reduced. Any newly assigned income (say for a new team, or increased income for an existing team) increases the rate of minting new credits, thereby increasing the credits available within the pool.

Each pool independently manages its credit allocation much like a country manages its own currency. However, the total purchasing power of a pool remains fixed regardless of the number of credits minted within the pool. This invariance is enforced by a mechanism called \textit{market weights}.

\paragraph{Market Weights.} 
Google adjusts priority between pools by assigning market weights, which correspond to the total purchasing power of each business unit. This is implemented by mapping the local credits of each pool to a standard global unit, similar to currency conversion in international trade.

Formally, if a pool $k$ with an assigned market weight of $W_k$, mints an sincome of $C_k$ credits per unit time, each local credit owned by pool $k$ is valued at $W_k / C_k$ standard global units.

This design decouples global prioritization (managed by Google-level administrators via $W_k$) from intra-pool allocation (managed by local administrators via $C_k$). However, this exposes teams to monetary dynamics akin to those in international economics. E.g., if pool administrators mint excessive credits (increasing $C_k$) while the market weight $W_k$ remains constant, the pool experiences inflation and credit devaluation.



\paragraph{User Journey: Reducing Users' Cognitive Burden via Automated Bidding.}
Incentivizing users to change their behavior necessarily entails imposing at least a small amount of cognitive awareness from the users. 
That said, QM is designed to offer most of its incentive benefits with very little investment of time from users. 
A user's journey involves starting a job, using exactly the same workflow as they did in the pre-QM world. Next, the user places their job in a priority queue of active jobs from their team, with the possibility of being asked to adjust their job's position by their team admin, if they so feel. Once a job is on the queue, the most important task of submitting a bid for the job is handled by an automated bidder. The system allows users to set team-level settings and job-level settings for bidding. These include (a) limit orders, which is a job-level setting, letting users specify the maximum price they are willing to pay for that job (b) income-leverage, which is a team-level setting that upper bounds the maximum bid allowed from a team at the product of the team's income and the income-leverage (c) a tool that lets team admins to configure policies for their team's jobs, from a simple priority queue to order their jobs to more sophisticated policies to dynamically manage importance of their jobs. Users throughout the company have access to a dashboard with real-time prices of all chip types, price history for the past several months, their team's credit balance, income rate, credit burning rate to help them make informed decisions while deciding on these settings. The automated bidder descends the priority queue for each team, estimates the cost of running each each job, and then, governed by the settings, aggregates costs to compute team level bids. 
In practice, system provided defaults work quite well for the settings, and indeed most users rely on them, except in periods of high price volatility, which we explicitly mitigate as described in later sections (see ``Volatility and Churn'' in Section~\ref{sec:MarkImp} for example). 

\paragraph{Demand and Bid Formulation.}
In every market cycle, the system aggregates the demand of queued workloads for each team. We denote $d^r_i$ as the aggregate demand of team $i$ for chips of type $r$, where $r$ encodes both the hardware generation and the specific cell. The automated bidder translates the team's balance, income rate, and settings into a scalar bid $b^r_i$ according to relative workload priority. This value represents the total credits the team is willing to spend to secure resource $r$ in the current cycle. Because the system resolves markets per resource type, the agent automatically proportions the team's total buying power across the relevant markets before clearing occurs.

\paragraph{Market Clearing.} 
Let $b_i^r$ and $d_i^r$ denote the bid and demand, respectively, of team $i$ for resource type $r$. For each pool $k$, we compute a specific clearing price $p_k^r$ to equate the pool's internal demand with its available supply. Pool prices differ from the global market price because pools effectively blend their own static resources (which are free to the pool) with resources purchased from the shared market.

For clarity, we assume all bids are normalized to standard credit units, i.e., we ignore currency conversion rates, as it doesn't change the equations much except for adding notational clutter. Let $s_k^r$ denote the static supply owned by pool $k$, and let $z_k^r$ denote the supply purchased by the pool from the shared market supply at a reference price $p_*^r$. The pool sets its internal price $p_k^r$ so that it breaks-even: i.e., the credits it generates from its teams for the $s_k^r + z_k^r$ chips they consume is equal to the credits it has to pay to the central market for having purchased $z_k^r$ credits from it. This yields the following break-even condition:

\begin{equation}\label{eq:blended_price} \tag{1}
p_k^r (s_k^r + z_k^r) = p_*^r z_k^r \quad \implies \quad p_k^r = p_*^r \frac{z_k^r}{s_k^r + z_k^r}
\end{equation}

Equation \eqref{eq:blended_price} highlights the discount effect: if a pool relies entirely on shared resources ($s_k^r=0$), the pool price equals the reference price ($p_k^r = p_*^r$). Conversely, if a pool relies heavily on owned supply, the internal price is significantly lower than the market reference price.

Given a pool price $p_k^r$, team $i$ demands $\min(d_i^r, b_i^r / p_k^r)$ units, i.e., a team's demand is simply the minimum of what it actually needs ($d_i^r$) and what it can afford at a given bid ($b_i^r/p_k^r$). To clear the pool-local market, the total demand from all teams $i$ in pool $k$ must equal the total available supply:
\begin{equation}\label{eq:local_clearing} \tag{2}
\sum_{i \in \text{Pool } k} \min\left(d_i^r, \frac{b_i^r}{p_k^r}\right) = s_k^r + z_k^r
\end{equation}
For any fixed reference price $p_*^r$, Equations \eqref{eq:blended_price} and \eqref{eq:local_clearing} have just two unknowns, namely $p_k^r$ and $z_k^r$, or more technically $p_k^r(p^r_*)$ and $z_k^r(p^r_*)$ as these are functions of $p^r_*$. 
Intuitively, this solution represents the equilibrium between the pool's internal supply-demand clearance and its blended supply cost. Equation \eqref{eq:local_clearing} is the pool's internal supply-demand clearance, where the LHS is the pool's \textit{aggregate demand curve}, mapping the internal price $p_k^r$ to the total quantity demanded by teams, and the RHS is the pool's total supply arising from it's own supply and purchased supply. Equation \eqref{eq:blended_price} represents the \textit{cost-blending curve}, which dictates how the internal price $p_k^r$ must rise as the pool purchases more shared supply $z_k^r$. The intersection of these two curves determines the unique purchase volume $z_k^r$ and internal price $p_k^r$ for that specific market reference price.

Finally, we are left with determining $p^r_*$. Let $Z^r$ be the total capacity of resource $r$ available in the shared marketplace. We seek the lowest price $p_*^r$ such that the aggregate demand from all pools does not exceed the shared supply:
\begin{equation}\label{eq:market_clearing} \tag{3}
 \sum_k z_k^r(p_*^r) \leq Z^r 
\end{equation}

Since the pool demand $z_k^r(p_*^r)$ is monotonically decreasing with respect to price $p^r_*$, the price $p_*^r$ is efficiently computable using binary search. If demand is low (i.e., even at zero prices there is enough supply: $\sum_k z_k^r(0) \leq Z^r$), the price $p^r_*$ is set to zero.

The algorithm described is implementable by a nested binary search, where the inner binary search is used to solve equations \eqref{eq:blended_price} and \eqref{eq:local_clearing} and the outer level to solve equation \eqref{eq:market_clearing}. With the right pre-processing, we implement the algorithm in nearly-linear time using the algorithmic technique in \cite{paes2016field}.

\paragraph{Resource Allocation.} 
Once the specific pool price $p_k^r$ is established, the allocation $x_i^r$ for each team $i$ is calculated as:
$$x_i^r = \min\left(d_i^r, \frac{b_i^r }{ p_k^r}\right)$$
This allocated quota is subsequently passed to the ML scheduler for execution.

\paragraph{Charges and Capacity Import}
In separate cycles from the market, teams are charged for the resources that they utilize and capacity is read from the resource management system. In the charge cycle, credits are deducted according to the resources that the team actually occupied since the last charge cycle. Capacity for all resource types read are persisted for the market to read in each cycle. Both of these cycles are run  less frequently than the market cycle.

\subsection{Market Implementation}\label{sec:MarkImp}
The Quota Marketplace is implemented as a global coordinator layered atop the hierarchical scheduler stack (comprising above-cell and cell-level schedulers). The system operates as a monolithic binary replicated for availability. While the architecture could theoretically support sharding—distributed bidding across teams and distributed clearing across resource types—we opted for a centralized design to prioritize simplicity and coherency in the initial deployment.

Centralizing the market logic in a single process introduces a potential single point of failure with a large blast radius. We mitigate this risk by decoupling quota generation from job execution. QM does not directly schedule jobs; instead, it generates quota inputs for the underlying scheduler. In the event of an outage, the system gracefully degrades by reverting to static pool scheduling using the last valid quota snapshot calculated by the market. Consequently, while the scheduler may rely more heavily on opportunistic rather than prioritized allocation—reducing efficiency—the system avoids catastrophic downtime.

\paragraph{Modular Design.}
As illustrated in Figure~\ref{fig:QMSystem}, QM employs a modular architecture where distinct phases of the market cycle are isolated via strict interfaces. While the system communicates with external components—specifically the scheduler and the resource management system—via standard Remote Procedure Calls (RPCs), internal interactions utilize the same framework primitives without incurring the overhead of network serialization. This design choice allows us to inherit robust distributed systems capabilities, such as deadline propagation, request cancellation, structured logging, and monitoring, within a unified process. Figure 3 highlights these external boundaries: the system ingests demand signals from the above-cell scheduler (\texttt{SampleDemand()}) and physical capacity data from the resource management system (\texttt{ImportCapacity()}), subsequently pushing allocated quotas back to the scheduler (\texttt{UpdateSchedulerInput()}).


\paragraph{Periodic Routines.}
The QM system executes three independent periodic routines: the market cycle, accounting (processing charges and income), and capacity ingestion. Each routine executes within a strict periodic closure that enforces serial execution, preventing overrun. The execution frequencies are tuned to the volatility of the data: the market cycle runs at high frequency ($<1$ minute), income and charge accounting runs every 5 minutes, and capacity ingestion occurs roughly hourly ($O(h)$). All critical system state—credits, capacity, quota, and demand—is persisted in a strongly consistent datastore. Market cycles are keyed by timestamp to ensure consistency across potentially overlapping executions.

By decoupling charging, income generation, and capacity import from the market, the system is able to react much more rapidly to demand changes. Capacity is relatively static on the order of minutes (emergency capacity changes can still be manually propagated in $O(m)$). Income and charge resolution indeed affect bidding power each cycle. But there is a feature of the market called \emph{minimum affordable duration} that ensures that the 5 minutes frequency of processing income and charges is more than enough to make bids fluctuate quickly \emph{only because of demand fluctuations}. Minimum affordable duration is a measure we introduced to ensure market stability and prevent abuse. It enforces that a team can bid an amount only if the burn rate of credits at that bid is sustainable for the next 120 minutes. Therefore, income and charges not being processed for the next 5 minutes will not affect the bidding power of any team. 

\paragraph{Replayable Market.}
The core market clearing logic is encapsulated as a stateless, pure function, \texttt{RunAuction}. All function inputs and outputs are persisted alongside auxiliary system data. Isolating the auction logic from the runtime environment enables deterministic offline simulation and replay of historical iterations using the exact same RPC handlers. This architecture has proven indispensable for reproducing production incidents, iterating on mechanism design, and verifying the correctness of changes.

\paragraph{Deployment and Verification.}
The monolithic binary architecture complicates standard canary deployment strategies. To compensate, we rely heavily on offline simulation and deterministic replay. Furthermore, although the binary is deployed globally, the market is logically sharded by resource type. This allows us to gate new features or configuration changes incrementally per resource type, enabling granular rollouts despite the unified deployment artifact.

To prevent regressions and ensure high-level correctness, we enforce a suite of invariant checks on every change. These checks leverage the market's replayability to compare test outputs and key metrics against historical baselines.

\begin{figure}[h!]
    \centering
    \includegraphics[width=1\linewidth]{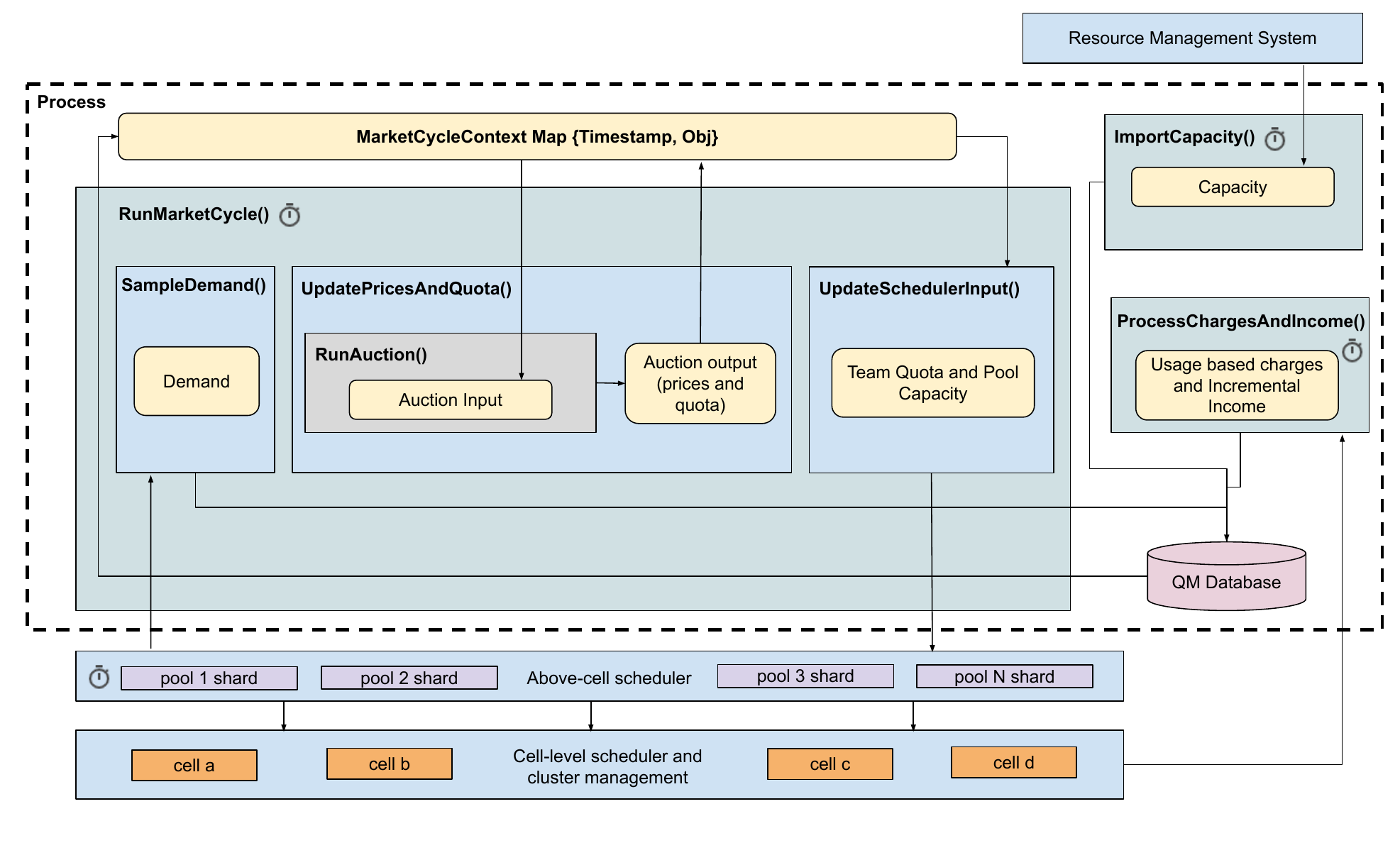}
    \caption{Architecture and components of the Quota Marketplace}
    \label{fig:QMSystem}
\end{figure}


\paragraph{Spatial Flexibility and Global Quota}
The system provides a location abstraction allowing users to submit \textit{spatially flexible} workloads (i.e., workloads that can run in any cell). We initially considered sharding the market by both resource type and cell. However, our analysis revealed that supply at the cell level is highly fragmented; running independent cell-level auctions yields ``thin'' markets with highly volatile pricing and unstable scheduling outcomes (smaller cells averaged $\sim$3\% poorer occupancy than larger ones). To maximize market liquidity, we aggregate demand and supply into a single \textit{Global Cell} abstraction, run a global market per resource type, and allocate global quota.

This abstraction introduces a trade-off between market stability and placement granularity. Global quota guarantees that capacity exists \textit{somewhere} in the fleet, but not necessarily where to a specific workload's is location constrained. Consequently, in roughly about 10\% of the cases, we observe cell-level contention despite being allocated sufficient quota in the global market. We accept this reduction in theoretical market efficiency as a necessary cost to ensure price stability and a usable experience for engineers.

\paragraph{Scheduler Interaction and Sharded Pool Scheduling.}
QM began operating as a naive asynchronous coordinator atop a hierarchical scheduler. It published quota and capacity updates, which the \textit{Above-Cell Scheduler} consumed to place logical workloads in some cell, followed by the \textit{Cell-Level Scheduler} which performed physical bin-packing of the workload in the chosen cell. Because these layers operated asynchronously, state conflicts were inevitable. We deliberately chose this loosely coupled design over deep integration to accelerate the initial deployment and isolate the market logic from the critical path of the scheduler kernel. While this introduces consistency challenges, it allowed us to iterate on the market mechanism without destabilizing the underlying cluster management stack. 

Relatedly, due to legacy constraints, the above-cell scheduler made separate scheduling decisions for each pool. This forced the QM system to physically move capacity between pools so that the above-cell scheduler can realize the quota allocated by the market. This operational lag between the market (which allocates quota immediately) and the resource manager (which moves capacity slowly), compounded mismatches between workload constraints (e.g. global quota vs. location constraints discussed earlier) and available supply. 

We very recently mitigated these issues by moving up the above-cell scheduling logic into the market system and thereby unifying scheduling across pools. This removes the asynchrony between the market and the scheduler and removes the expressivity limitations imposed by the quota interface. This change allows the market to define the inevitable contended cell queuing behavior described in the section above. Importantly, it also paves the way for the market to have a variety of workload-level signals, which it didn't have earlier. 
This architectural shift has major current and future benefits, but comes with a tradeoff: it trades off increased failure domain risk (centralization).

\paragraph{Market Cycle Latency Improvements.}
The latency of the market cycle determines the system's control loop frequency and its ability to react to demand spikes. Through optimizations in the auction algorithm and RPC handling, we reduced the p50 market cycle latency from $100s$ to under $30s$. End-to-end scheduling latency acts as a ``tax'' on the system, resulting in a measurable 1.3\% occupancy wastage across the fleet. As another significant benefit of the architectural shift discussed in the preceding paragraph, we were able to reduce the end-to-end latency to approximately $30s$. This recovers some of the occupancy wastage and improves engineering velocity by reducing the time between workload submission and feedback.

\paragraph{Volatility and Churn.}
Because QM adjusts quotas dynamically based on price fluctuations, it naturally incurs somewhat higher preemption rates than static systems. To prevent high-frequency volatility, as previously mentioned, we enforce that teams must demonstrate sufficient credit balance to sustain a bid for a minimum duration. This smoothing mechanism ensures that bids do not oscillate with the granular income and charge cycles. We treat market stability not merely as an optimization but as a critical usability requirement; users rely on stable price signals to interpret scarcity effectively. Beyond this, there are a host of other scheduling algorithm features designed to mitigate scheduling churn.

\paragraph{Workload Boundaries and Cube Aware Scheduling.}
While the market models demand as a scalar fluid, physical ML workloads are discrete and topological (e.g., a 64-chip request often requires a specific $4 \times 4 \times 4$ cube, corresponding to 16 interconnected machines). The cell-level scheduler handles the physical bin-packing and defragmentation of these shapes. QM is currently \textit{topology-oblivious}: it assigns aggregate quota without guaranteeing that the free chips form a contiguous cube. This abstraction gap inevitably leads to defragmentation-based preemptions where quota exists but geometry does not, and today it arises roughly in about 1\% of the cases.  For the initial deployment, we accepted this inefficiency to avoid the complexity of solving a multi-dimensional knapsack problem in the market loop. Future iterations aim to incorporate topology-aware pricing to bridge this gap.

\paragraph{Auxiliary Resource Requirements.}
QM employs a \textit{principal resource pricing} model: it explicitly prices only ML accelerators, while auxiliary resources (CPU, RAM, disk, network) are allocated proportionally to the accelerator quota. This design is grounded in the observation that accelerators are the dominant bottleneck resource for ML training at Google. Further, this design choice avoids a combinatorial marketplace, and presents users with a single-dimensional price signal, significantly reducing the cognitive load required to participate in the market. While combinatorial auctions could theoretically offer higher allocative efficiency, we prioritize the operability of a simpler pricing model.

%% file: empirical.tex
\section{Deployment Metrics and Takeaways}\label{sec:empirical}
In this section, we quantify the gains unlocked by the deployment of Quota Marketplace, Google-wide. There are many dimensions along which we discuss improvements. For some cases, the comparison we present is with the conventional static pools. In other cases, when there are entirely new opportunities unlocked by the Quota Marketplace, we discuss them with metrics.

\paragraph{Occupancy \& Company-Prioritized Occupancy.} In the Introduction, we described how QM would increase both company-prioritized occupancy and overall occupancy. Here, we quantify these increases.
Figures~\ref{fig:gqm_prod_opportunistic} and~\ref{fig:nongqm_prod_opportunistic} illustrate this increase by showing occupancy fractions of prioritized and opportunistic allocations of the same, very popular, resource type in two different pools: Quota Marketplace and conventional, respectively. Both pools are comparable in their sizes with hundreds of thousands of chips each. The difference in occupancies is stark. The average overall occupancy (i.e., prioritized + opportunistic) in the conventional pool is at 75\%, and the same quantity in the Quota Marketplace pool is at 93\%. This marks a ${\sim}24\%$ increase in overall occupancy. Also, average opportunistic occupancy in the Quota Marketplace pool is at a small 5.67\%, as opposed to 33.5\% in the conventional pool. This fully validates our expectation that the market-based system would substantially eliminate opportunistic occupancy, and what remains of it would be small, arising from some system inefficiencies.

\begin{figure}[h]
\includegraphics[width=0.5\textwidth]{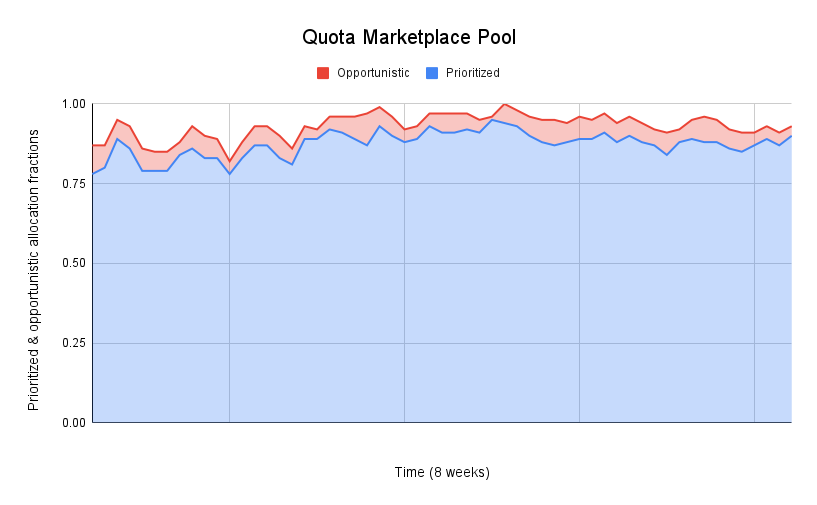}
\caption{Occupancy fractions of company-prioritized and opportunistic allocations in the Quota Marketplace pool for a popular resource type.}
\label{fig:gqm_prod_opportunistic}
\end{figure}
\begin{figure}[h]
\includegraphics[width=0.5\textwidth]{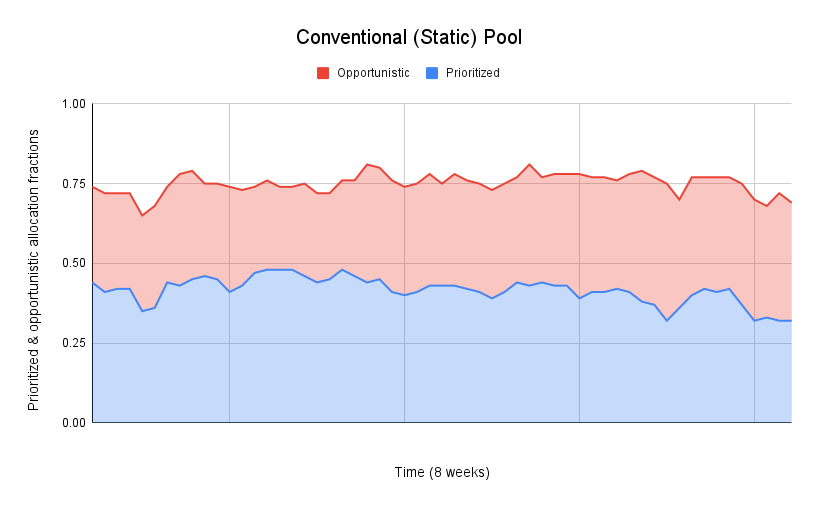}
\caption{Occupancy fractions of company-prioritized and opportunistic allocations in the conventional (static) pool for the same popular resource type as in Figure~\ref{fig:gqm_prod_opportunistic}.}
\label{fig:nongqm_prod_opportunistic}
\end{figure}

\paragraph{Time Varying Supply.} The available pool of compute resources often experiences significant fluctuations driven by evolving business priorities and supply chain dynamics. Common events include supply reductions via clawbacks for reprioritization, temporary capacity injections, reallocation of resources across different geographical locations or accelerator types, and changes to anticipated delivery timelines for new hardware. These frequent supply adjustments pose a challenge to traditional resource allocation systems, often resulting in slower user adaptation and, consequently, suboptimal fleet utilization. Figure~\ref{fig:supply_updates} shows that supply variations over time are substantial, and are not just a theoretical worst-case situation to reason about. While the figure captures shifts in accelerator types, note that supply fluctuations extend beyond this. For instance, even within a single accelerator type, numerous changes in geographical location (among the 3-digit number of data centers) can occur.


\begin{figure}[h]
\includegraphics[width=0.5\textwidth]{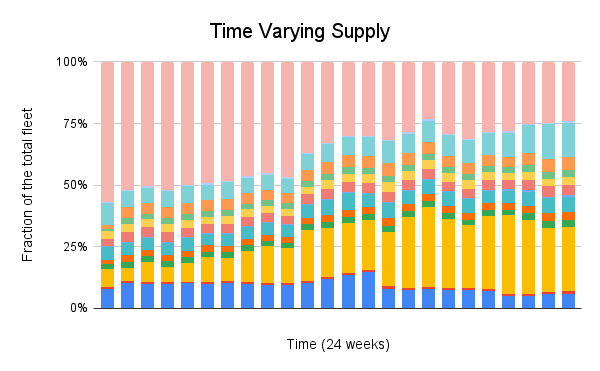}
\caption{The total fleet managed by Quota Marketplace broken down by resource type over a period of 24 weeks.}
\label{fig:supply_updates}
\end{figure}

\paragraph{Minting Supply from Transient Resources.} 
As discussed in the Introduction, historically, in order to maintain allocation speed, various processes had to rely on substantial amounts of readily available buffer resources, that were underutilized and unprioritized in the meantime. Since the Quota Marketplace allows clawbacks in a matter of minutes, this enabled the addition of a large fraction of these buffer resources to Quota Marketplace, resulting in additional minted supply. We club multiple such buffer sources into an umbrella category named ``serving buffer''. Likewise, new capacity keeps arriving on a daily basis, even though the promised delivery date of these chips could be several days or weeks down the line. This is because fulfilment of a large consignment necessarily has to happen in batches, slowly building up to the deadline where the entire consignment is due. On the other hand, even if one were to allocate these chips that arrived ahead of time to the business units that are due to receive them later, the business units cannot be charged back ahead of their scheduled start date. Therefore, these soon-to-be assigned chips can neither be charged back for, nor be put to other high priority uses (given that clawbacks are time-consuming in traditional pools). Such temporarily available resources tend to be underutilized and unprioritized in traditional systems, but they can be readily utilized by the Quota Marketplace. We call the supply that the Quota Marketplace receives from such sources as the ``holding pool''. Figures~\ref{fig:committed_and_bonus_capacities} and~\ref{fig:percentage_lift_from_bonus_capacities} serve to illustrate the magnitude of bonus capacities unlocked by the Quota Marketplace from serving buffer and the holding pool. The bonus capacity unlocked by the Quota Marketplace consistently is in the order of hundreds of thousands of chips, reflecting the substantial volume of reclaimed resources integrated into the marketplace. Note that given the 8-year period over which demand has been growing tenfold annually, and correspondingly companies have been heavily investing in these chips, these holding pools are inevitably substantial. Thus, a crucial contribution of the Quota Marketplace is its ability to effectively harness large quantities of transiently available resources, improving overall system utilization.



\begin{figure}[h]
\includegraphics[width=0.5\textwidth]{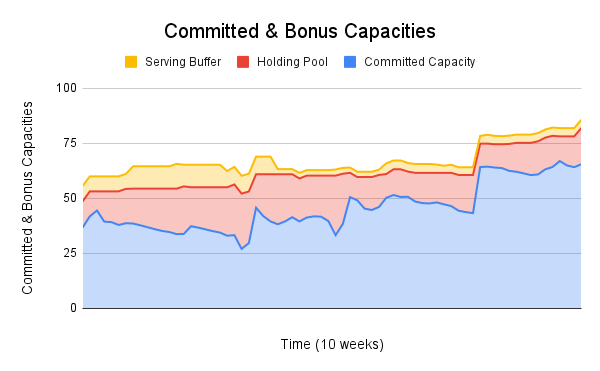}
\caption{Over the period of 10 weeks, we see the bonus capacities that the Quota Marketplace has made available from different sources (y-axis is normalized). The total bonus is substantial, and remains consistently so.}
\label{fig:committed_and_bonus_capacities}
\end{figure}
\begin{figure}[h]
\includegraphics[width=0.5\textwidth]{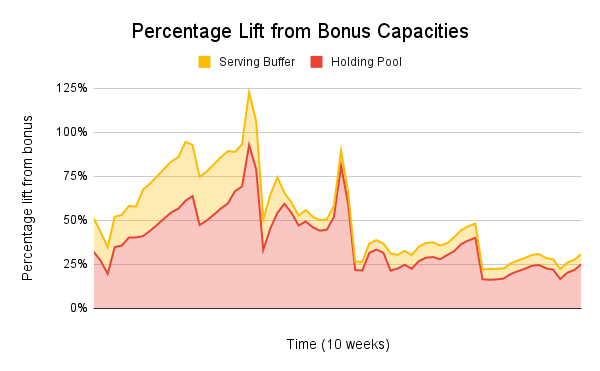}
\caption{Another way to visualize Figure~\ref{fig:committed_and_bonus_capacities}: over the same period of 10 weeks as in Figure~\ref{fig:committed_and_bonus_capacities}, the percentage lift in capacities offered by the two different sources. The lowest point in this 10 weeks period is about 25\%.}
\label{fig:percentage_lift_from_bonus_capacities}
\end{figure}

\paragraph{Incentives in Action: Occupancy.} The operational dynamics of the Quota Marketplace periodically offer empirical insights into user responses to its incentive structures. A key example arises during the regular deployment of new accelerator hardware batches to specific geographic locations or cells. By measuring the time lag between the introduction of this new supply and the corresponding rise in resource occupancy, we can quantify the market's efficiency in absorbing new capacity. We conducted a comparative analysis focusing on this phenomenon. As shown in Figures~\ref{fig:gqm_supply_occupancy} and~\ref{fig:nongqm_supply_occupancy}, two resource pools received simultaneous additions of new accelerators: one operating under the Quota Marketplace framework and a control pool managed via traditional static allocation. Within the Quota Marketplace pool, occupancy levels rapidly tracked the increase in available supply, while in the conventional pool, occupancy substantially lagged supply. Significantly, this swift uptake in the Quota Marketplace pool occurred organically, without requiring broad organizational announcements or specific notifications regarding the new hardware's availability or location.  Users, guided predominantly by the price signals within the marketplace, efficiently discovered and migrated towards these newly-introduced and lower-priced resources.


\begin{figure}[h]
\includegraphics[width=0.5\textwidth]{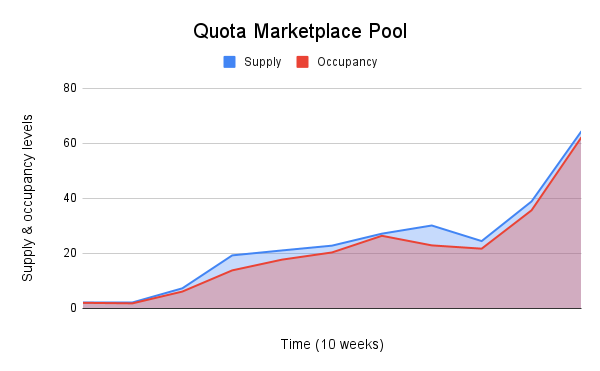}
\caption{Over a 10-week period when a new resource type was introduced, we see how occupancy closely tracks supply, even during substantial supply spikes (y-axis is normalized).}
\label{fig:gqm_supply_occupancy}
\end{figure}

\begin{figure}[h]
\includegraphics[width=0.5\textwidth]{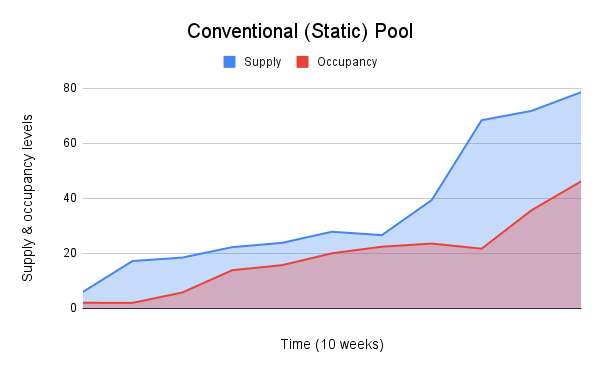}
\caption{Over the same 10-week period as in Figure~\ref{fig:gqm_supply_occupancy}, when the same new resource type was introduced in the conventional static pool, we see how occupancy substantially lags behind supply (y-axis is normalized as in Figure~\ref{fig:gqm_supply_occupancy}).}
\label{fig:nongqm_supply_occupancy}
\end{figure}

\paragraph{Incentives in Action: Demand Shaping.} 

The introduction of a new resource type, that is a partial substitute to a pre-existing resource type, provides another opportunity to obtain useful insights into user responses to incentives. Initially available at very low prices due to lack of demand, this new resource type requires users to adapt their workflows for compatibility. Despite the existence of this friction, the price of this new resource type rapidly increases to almost catch up with the pre-existing resource type. This price convergence occurs organically through user responses to price signals and peer communication via chat and email about the existence of a very low-priced alternative, rather than management directives or mandates. This behavior confirms that the Quota Marketplace effectively shapes demand through pricing mechanisms, i.e., it indeed makes demand flow towards less-congested resource types. 

\begin{figure}[h]
\includegraphics[width=0.5\textwidth]{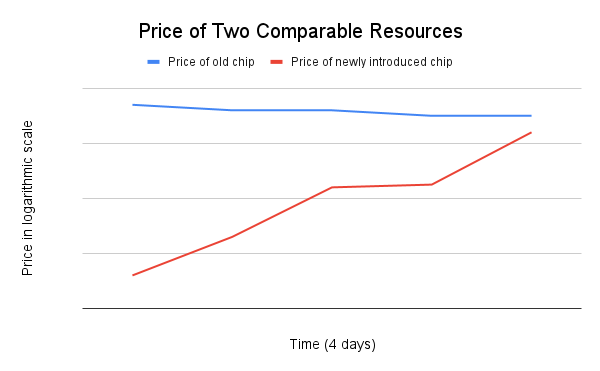}
\caption{When a new resource type is introduced in the market, it starts at a low price, but the price grows exponentially (y-axis is in logarithmic scale) to catch up with that of a comparable pre-existing resource type.}
\label{fig:price_matching}
\end{figure}

\paragraph{Tackling Highly Non-trivial Tasks.}
Another important benefit of the QM and the flexibility that it provides is that it makes it feasible to tackle highly non-trivial large-scale tasks that would have previously required multiple quarters to complete. Such tasks can now be completed much faster (in a matter of weeks) due to the unprecedented acceleration that the Quota Marketplace is able to provide by reprioritizing portions of its massive fleet. Note that even such massive reprioritizations need not severely impact any particular business unit, or team, as their impact is seamlessly amortized across the whole market through price increases.

%% file: challenges.tex

%% file: theory.tex
\section{Theoretical Analysis}\label{sec:theory}
We now formally prove that the market-based mechanism achieves efficiency and fairness guarantees, and we provide examples exhibiting how previously proposed solutions, like Karma, fail to achieve the same. We exhibit this separation in a simple model with a fixed shared supply of a single resource and ignore pool-specific supply. 

\paragraph{Model.}\label{sec:prelim}
A set $N$ of $n$ agents (teams) compete over $s$ units of a single resource, and our goal is to design a fair and efficient mechanism that determines how these units are allocated over time. Specifically, given some time horizon of $T\geq 1$ rounds, the mechanism decides how many units each agent is allocated in each round $t\in [T]$ (where  $[T]=\{1, 2, \dots, T\}$). An allocation $x\in \mathbb{N}^{n\times T}$ determines the number of units $x_{t}[i]$ that each agent $i$ will receive in each round $t\in [T]$ (for simplicity, we assume units are divisible, so $x_{t}[i] \in \mathbb{R}$). For an allocation to be feasible, it needs to allocate no more than $s$ units in each round, i.e., $\sum_{i\in N} x_{t}[i]\leq s$ for all $t\in [T]$.

At each round $t$, each agent $i$ demands $d_{t}[i]\geq 0$ units and has some value $v_{t}[i]\geq 0$ for each of these units. Therefore, $i$'s value for receiving $x_{t}[i]$ units in round $t$, denoted $v_{t}[i]( x_{t}[i])$, is equal to $\min \{x_t[i], d_{t}[i]\}\cdot v_{t}[i]$, and the overall value of $i$ in allocation $x$ is $v_i(x)=\sum_{t\in [T]} v_{t}[i](x_{t}[i])$. This strictly generalizes the class of valuations used in the Karma paper~\cite{VFACKT23}, which correspond to the special case where $v_{t}[i]=1$ for all $i$ and $t$, i.e., all the agent values are uniform over time. Another interesting intermediate class that we consider are bi-valued agent preferences, which are the mild generalization of uniform values where each agent's values can take one of two values, e.g., either \textsc{High} or \textsc{Low}.

\paragraph{Pareto Efficiency and Max-min Fairness.} An allocation $x$ is Pareto-dominated by allocation $x'$ if $v_i(x')\geq v_i(x)$ for every agent $i$ and $v_i(x')>v_i(x)$ for at least one agent $i$. An allocation is \emph{Pareto efficient} if it is not Pareto-dominated by any other feasible allocation. 
If $w_i>0$ captures the relative weight, or significance, of each agent $i\in N$ (often referred to as the agent's ``fair share''), then the (weighted) \emph{max-min fair} allocation assigns to each agent $i$ a fraction $\frac{w_i}{\sum_{j\in N}w_j}$ of the $T\cdot s$ available units. 
We call an allocation $x$ \emph{$\alpha$-fair} if the total number of units allocated to each agent $i$ is at least 
\begin{equation*}
\alpha\cdot \frac{w_i}{\sum_{j\in N}w_j}\cdot T \cdot s. 
\end{equation*}


\paragraph{Market-Based Mechanism.}
Without loss of generality, we assume agent weights are normalized so that $\sum_{i\in N}w_i =s$, and the market-based mechanism starts by allotting a budget of $T\cdot w_i$ credits to each agent $i$.
Then, at the beginning of each round $t$, each agent $i$ reports the number $d_{t}[i]$ of units they request, along with a limit price $\bar{p}_{t}[i]$ (the maximum amount they would be willing to pay per unit), and amount of budget $b_{t}[i]$ that they are willing to spend in that round. 
Given this information and some price $p_t$, the mechanism can compute the demand of each agent $i$ at round $t$ as
\begin{equation*}
D_{t}[i](p_t)=
\begin{cases}
0 & \text{ if } p_t> \bar{p}_{t}[i]\\
\min\{b_{t}[i]/p_t,~ d_{t}[i]\} & \text{ otherwise.}
\end{cases}
\end{equation*}
The total demand at round $t$ for a price $p_t$ is then equal to $D_t^*(p_t)=\sum_{i\in N} D_{t}[i](p_t)$, and the market-clearing price is the maximum price that ensures all units are allocated, i.e., the price $p_t$ that satisfies $D_t^* (p_t)\geq s$ and $D_t^*(p_t+\epsilon)<s$ for any $\epsilon>0$. The mechanism computes this market clearing price and allocates to each agent $i$ a total of $D_{t}[i](p_t)$ units (if $D_t^*(p_t)>s$, the requests that would remain at price $p_t+\epsilon$ are satisfied first, and a tie-breaking rule is used for the rest). The mechanism then subtracts $p_t\cdot x_t[i]$ credits from the total budget of each bidder $i$ and proceeds to the next round.

We make the common ``large market'' assumption that no single agent can significantly affect prices, so they behave as price-takers, and since these are very scarce resources, the number of units requested at each round $t$ is at least $s$, i.e., $\sum_{i\in N}d_t[i]\geq s$. We also assume that agents can estimate their demands and the prices for future rounds, and they choose how much to spend in each round $t$ accordingly. 

\paragraph{
Chip-Hour Mechanisms.} 
To overcome the limitations of traditional approaches that allocate chips in a static fashion (i.e., each agent has a fixed number of chips in every round), Chip-Hour mechanisms allocate ``hours'' of chip access to the agents, and then provide them with some flexibility regarding when to request these hours. In each round $t$, each agent $i$ requests $d_t[i]$ units and, based on what fraction of their chip-hours they have used, and the overall demand, the Chip-Hour mechanism decides how many chips to assign. A prominent example is the Karma system of \citet{VFACKT23}, which strictly prioritizes agents who have used less of their chip-hours so far, aiming to optimize max-min fairness. Some aspects of this system are similar to the market-based mechanism: agents are bootstrapped with some credits and they also earn an income of such credits (in proportion to their chip-hours); they then spend these credits to gain access to resources. However, there is a crucial difference: the ``price'' of every resource in the Karma system is fixed over time, so spending one credit always buys the user access to the same amount of resource. As we show in this section, this limitation restricts the agents' ability to optimize their allocation and it can lead to unfair and inefficient outcomes.

\subsection{Limitations of Chip-Hour Mechanisms}\label{sec:chip-hour-limitations}
To exhibit some limitations of Chip-Hour mechanisms, we first consider the class of uniform valuations (i.e., $v_{t}[i]=1$ for all $i$ and $t$), which was the focus of \citet{VFACKT23} in their analysis of Karma. For uniform valuations, they were able to achieve significant improvements over prior work which assumed that agent requests are static over time (i.e., $d_{t}[i]=d_{t'}[i]$ for any $t,t'\in[T]$). Specifically, they showed that Karma is Pareto efficient for uniform valuations and achieves a ``myopic'' notion of fairness: at any round $t$, given the resource allocation choices made in previous rounds (which may be unfair but cannot be changed anymore), the allocation at round $t$ aims to optimize max-min fairness. 

%
We first exhibit that moving beyond this myopic notion of fairness and aiming for a global notion like max-min fairness is, in fact, impossible, not just for Karma, but for any Chip-Hour mechanism.
This is in contrast to the market-based mechanism which leverages the agents foresight regarding their future demand to move beyond myopic fairness notions and achieve (approximate) max-min fairness even in settings with general heterogeneous valuations (Theorem~\ref{thm:balance}).

\begin{restatable}{theorem}{imbalance}\label{thm:imbalance}
Chip-Hour mechanisms cannot guarantee $\alpha$-fairness for $\alpha>1/n$, even for uniform-value instances.
\end{restatable}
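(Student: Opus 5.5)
We prove the statement by exhibiting a single uniform-value instance (all $v_t[i]=1$), with equal weights $w_i=s/n$. In this instance every Chip-Hour mechanism is forced, by the demand pattern, to give one agent only about a $1/n$ fraction of its fair share. The guiding idea is that a Chip-Hour mechanism decides allocations round by round from past usage and current requests. Its "price" per unit of access is fixed, so it has no way to learn, or to let an agent express, that the agent's demand is concentrated in some future window where it will be uncontested only if others are pushed out. Concretely, take $s=1$ and $T=n$ rounds. Agents $2,\dots,n$ demand a full unit in every round. Agent $1$ demands nothing in rounds $1,\dots,n-1$ and then demands $n$ units in round $T=n$; more robustly, it demands heavily only in late rounds. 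The total supply is $Ts=n$ and agent $1$'s fair share is $T s/n = 1$. Because agent $1$ does not request anything early, it cannot receive anything early, so its entire allocation comes from round $n$ and is at most $s=1$. This alone still gives fairness, so the construction must be sharpened.

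The sharpening makes agent $1$'s demand arrive in a round where all agents request. Fix $T$ rounds. In rounds $1,\dots,T-1$, agents $2,\dots,n$ each request $s$ units while agent $1$ requests nothing, so the mechanism serves them, consuming $(T-1)s$ units. In the last round all $n$ agents request $s$ units. The key step is to argue that in that round the Chip-Hour mechanism cannot give agent $1$ more than it gives the others beyond its per-round share in a way that recovers fairness. Agent $1$ can receive at most $s$ in total, which is a $\frac{s}{Ts/n}=n/T$ fraction of its fair share. Taking $T\to\infty$ (or $T$ large) would suggest even $\alpha\to 0$. Since the theorem claims exactly $1/n$, the intended construction is likely: in the final round the mechanism splits the round among all demanding agents, and Karma-style symmetric treatment gives agent $1$ about $s/n$ or at most $s$. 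I will calibrate $T=n$ with agent $1$ demanding only in round $T$ and the others saturating every round. Then agent $1$ gets at most $s$ out of a fair share of $s$, so the bound comes from a different scaling: the fair share is $Ts/n$ with $T$ larger, and agent $1$ is capped at $s$. Choosing $T=n\cdot k$ gives ratio $n/T$. I will therefore show that no $\alpha>1/n$ is achievable by picking $T=n^2$, giving ratio $1/n$; larger $T$ gives even less, which only strengthens the claim.

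The point needing care, and the main obstacle, is making the argument mechanism-independent. I must show that any Chip-Hour mechanism, not just Karma, cannot allocate to agent $1$ in rounds where it requests nothing. This follows because such mechanisms allocate only in response to requests $d_t[i]$ and never exceed demand; allocating unrequested units is useless anyway. The argument must also hold even if agent $1$ is strategic. Under uniform values, agent $1$ truly has zero demand early, so requesting early brings it no value. Its chip-hours being "saved" does not help, because the per-round cap $s$ binds in the late rounds. I will also check that Pareto efficiency, or any reasonable behavior, does not force a different outcome. The cap $x_t[1]\le s$ per round times the number of rounds with positive demand bounds agent $1$'s total, while its fair share scales with $T$. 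By contrast, the market mechanism lets the others face rising prices early, so it avoids this failure because agent $1$ can anticipate; that contrast is deferred to Theorem~\ref{thm:balance}.
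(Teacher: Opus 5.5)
Your construction has a genuine gap: the shortfall it exhibits comes from the demand profile, not from anything the Chip-Hour mechanism does. With $T=n^2$ and agent~1 requesting only in the last round, agent~1's total demand is $s$ while its fair share is $Ts/n=ns$. So \emph{every} allocation that respects demands also gives agent~1 at most a $1/n$ fraction of its share. That includes the max-min fair allocation itself and the market mechanism's outcome.

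Push this to its end, as your remark that larger $T$ ``only strengthens the claim'' invites, or simply let agent~1 demand nothing at all. The same argument then shows that no demand-respecting mechanism guarantees any $\alpha>0$. That makes the theorem vacuous and erases the contrast with the market that the statement is meant to establish. A meaningful counterexample must have fair shares that are simultaneously attainable.

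Moreover, the back-loaded direction is exactly the one Chip-Hour mechanisms handle well, as you yourself noticed for $T=n$. A mechanism that prioritizes the agent with the least past access sees that agent~1 has used nothing, and serves it in the last round. Past usage is observable to these mechanisms; future demand is not.

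The missing idea is to front-load agent~1's demand, as the paper does. Take $w_i=1$, $s=n$, $T=n$. Agent~1 requests $s$ units in round~1 and nothing afterwards, and every other agent requests $s$ units in every round. The fair share $n$ is attainable by everyone at once: agent~1 takes all of round~1, and the other $n-1$ agents split each of the remaining $n-1$ rounds.

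In round~1, however, a Chip-Hour mechanism sees agents with equal chip-hours, identical requests, and no history. It cannot know that agent~1's demand will vanish, so it splits round~1 equally. Agent~1 ends with $s/n=1$ unit against a fair share of $n$, which is exactly the $1/n$ bound with no tuning of $T$. The market, in contrast, lets agent~1 spend its whole budget in round~1 and reaches the max-min allocation.
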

\begin{proof}
Consider a simple instance with $n$ agents of equal significance $w_i =1$, $s=n$ units, and $T=n$ rounds. Agent 1 requests $d_{1}[1]=s$ units in round 1 and $d_{t}[1]=0$ for all subsequent rounds $t>1$. All other agents request $s$ units in every round. The max-min fair solution for this instance would allocate all the $s$ units in the first round to agent $1$ and split the units of the remaining rounds equally among the other agents, allocating $s=n$ units to every $i$. On the other hand, given this instance, and lacking any foresight regarding the drop in future demand from agent $1$, any Chip-Hour mechanism would distribute the units in the first round equally among the agents (since all agents have equal $w_i$ and, hence, equal chip-hours). This yields a single unit ($s/n=1$) for agent 1, leading to an outcome that is only $1/n$-fair.

In comparison, consider the outcome of the market-based mechanism facing this instance. Since agent 1 does not need any units after the first round, she is willing to spend all of her $s$ credits in the first round. This leads to unit price of at least $1$ for the first round. If any other agent $i$ also spent some of their budget on the first round, then at least one round $t>1$ would have a unit price that is strictly less than $1$ (this is because the budget spent on subsequent rounds would be strictly less than $(n-1)s$, while the number of available units in subsequent rounds is $(n-1)s$). Therefore, this would not be a market equilibrium since agent $i$ would prefer to spend on round $t$ rather than round $1$.
As a result, in the market equilibrium every agent receives an equal number of units, leading to the max-min allocation, i.e., $1$-fairness.
\end{proof}

We now show that for the slightly more general class of bi-valued preferences, Chip-Hour mechanisms are unable to guarantee Pareto efficiency too. In fact, moving beyond uniform valuations  introduces incentives for the agents to manipulate these mechanisms and strategically misreport their demands. Our next result shows that their Pareto efficiency is violated, regardless of whether the users behave truthfully or strategically. This is in contrast to the market-based mechanism which achieves Pareto efficiency even for the general class of heterogeneous valuations (Theorem~\ref{thm:efficiency}).

\begin{restatable}{theorem}{NotPareto}\label{thm:Not_Pareto}
Chip-Hour mechanisms violate Pareto efficiency even for bi-valued instances, irrespective of whether the agents behave truthfully or strategically.
\end{restatable}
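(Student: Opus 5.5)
The plan is to exhibit one small bi-valued instance and check both behavioral regimes on it: truthful reporting and strategic reporting. The working premise, as in the proof of Theorem~\ref{thm:imbalance}, is that a Chip-Hour mechanism treats every requested unit identically. Within a round, it splits the units among requesting agents based only on their chip-hour usage and reported demand, with no way to express value. A natural candidate uses two agents of equal weight, $s=1$ and $T=2$.

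\textbf{Instance.} Agent 1 demands one unit in both rounds, with value \textsc{High} in round 1 and \textsc{Low} in round 2. Agent 2 has the mirrored profile: \textsc{Low} in round 1 and \textsc{High} in round 2. The Pareto efficient (indeed optimal) allocation gives round 1 to agent 1 and round 2 to agent 2. Each agent then gets one unit of \textsc{High} value, for a total of $2\cdot\textsc{High}$.

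\textbf{Truthful behavior.} Under truthful reporting both agents request a unit in both rounds. By symmetry of equal chip-hours, the mechanism splits round 1 equally. In round 2 the usage histories are again equal, so it splits round 2 equally as well. Each agent obtains $(\textsc{High}+\textsc{Low})/2$, which is strictly less than \textsc{High}. The outcome is therefore Pareto-dominated by the swap allocation above.

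\textbf{Strategic behavior.} Here I would argue that the induced game has no equilibrium yielding the efficient allocation, or at least that some natural equilibrium is inefficient. Suppose agent 2 follows the efficient plan and requests nothing in round 1. Then agent 1 receives the full round-1 unit, and in round 2 Karma-style prioritization favors agent 2, who has used less. Agent 1 could instead also request in round 2 and receive nothing, so the efficient outcome looks potentially stable. That would break the argument, so the instance needs asymmetry or more agents and rounds to create a profitable deviation.

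My first fix would be to make agent 1's round-2 value also nontrivial and add a third agent with uniform demand. Then an agent who abstains in its low round merely cedes units to a third party, rather than banking them for its high round. More generally, abstaining gives no guaranteed future priority when other agents' usage also stays low. The intended conclusion is that each agent's best response is to request in every round where its value is positive, which reproduces the inefficient equal splits. Alternatively, I would show that misreporting (inflating demand) is profitable and leads to the same inefficiency.

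\textbf{Main obstacle.} The hardest step is the strategic case: showing that for every equilibrium or strategic profile, the outcome is not Pareto efficient. This requires care about how much future priority a Chip-Hour mechanism grants for unused hours, and the instance must be chosen so that this credit is insufficient. The general principle is that fixed prices do not let the \textsc{High}/\textsc{Low} ratio be expressed.
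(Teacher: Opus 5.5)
Your truthful-reporting case is correct and is essentially the paper's argument. The paper uses $s=2$ with values $1$ and $\epsilon$; with divisible units your $s=1$ version is equivalent. The gap is the strategic case, which the proposal does not prove.

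You correctly observe that under a Karma-style rule the mirrored instance admits the efficient outcome as a stable profile. But the repair you sketch points in the wrong direction. You want to show that every agent's best response is to request in every round where its value is positive, so that equal splits reappear. Under a rule that gives priority to the less-used agent, the incentive is the opposite: abstaining in a low-value round banks priority. Adding a third agent with uniform demand does not change this, since the abstainer still enters round~2 with strictly less usage than the others. Worse, the resulting outcome can be Pareto efficient. For example, if the third agent's value is \textsc{High}, then agents~1 and~3 split round~1, agent~2 gets all of round~2, and every unit goes to someone who values it at \textsc{High}. Your own check also shows that the mirrored instance works for history-independent rules but not for Karma-like ones. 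So you need either a case split on how the rule uses history, or a single instance that defeats all rules at once. The proposal supplies neither.

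The missing idea is to make strategic abstention itself the source of waste, and to tailor the instance to the type of rule. Take two agents, two units and two rounds. Let agent~1 value round-1 units at $\epsilon<1/2$, let agent~2 not want them, and let both value round-2 units at $1$.
\begin{itemize}
\item If the rule gives round-2 priority to the agent with less prior usage, consuming the round-1 units would cost agent~1 its round-2 share ($2\epsilon<1$). So agent~1 requests nothing in round~1, those units go unused, and giving them to agent~1 is a Pareto improvement.
\item If the rule favors the agent with more prior usage, agent~2 instead inflates its round-1 demand to protect its round-2 share. It ends up holding a round-1 unit it does not value while agent~1 does; your ``inflating demand'' remark belongs here.
\item If the rule ignores history, no agent gains by misreporting, and your mirrored instance already finishes the argument.
\end{itemize}
This three-way split is how the paper proves the strategic case.
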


\begin{proof}
Note that Chip-Hour mechanisms ask each agent $i$ to report their $d_{t}[i]$ demand. We first consider the case where the agents truthfully report this value in each round, and we then also consider the case where the agents may strategically misreport this value (e.g., by reducing the $d_{t}[i]$ in rounds where their value is \textsc{Low}, aiming to save their credits for subsequent rounds of \textsc{High} value). For this proof we normalize the \textsc{High} value to $1$ and the \textsc{Low} value to some $\epsilon\in (0, 1)$.

\emph{Truthful Reporting.} To verify the lack of Pareto efficiency for the case where the agents truthfully report their demand, $d_{t}[i]$, consider the following simple instance involving just two agents and two rounds. In this instance, we have $d_{t}[i]=2$ for both agents and rounds, but the values of agent 1 are $1$ for the first round and some arbitrarily small $\epsilon$ for the second, while the values of agent 2 are $\epsilon$ for the first round and $1$ for the second. In other words, agent 1 strongly prefers the units of the first round while agent 2 strongly prefers those of the second round. However, since Chip-Hour mechanisms do not elicit the agents' values (implicitly assuming that they are uniform over time), the two agents look identical, and the outcome would equally split the two units of each round among the two agents, leading to an overall value of $(1+\epsilon)$ for each. To verify that this is not Pareto efficient, note that if we were to give agent 1 both of the first-round units and give agent 2 the second-round units, then both agents would have a value of $2$ instead, leading to an almost 100\% increase in both values.

\emph{Strategic Reporting.} It is natural to assume that an agent facing a mechanism that disregards the intensity of their demand (as in the instance above), the agent may choose to strategically misreport their $d_{t}[i]$ requests, aiming to maximize their value (e.g., by submitting no requests during rounds of lower value). We now show that even if agents report strategically aiming to optimize their allocation with respect to their underlying values, Chip-Hour mechanisms still fail to achieve Pareto efficiency. We prove this using small instances with two agents, two units, and two rounds. In all instances that we consider, both agents request both units of the second round, so the Chip-Hour mechanism's decision regarding how to allocate those two units is purely a function of the units they were allocated in the previous round (or, equivalently, a function of their remaining budget). We prove that irrespective of how the mechanism uses this information, there exists an instance for which the resulting outcome will be Pareto inefficient.

First, we focus on Chip-Hour mechanisms which prioritize the agents who have used less of their credits, like Karma does. Specifically, for our two-round instance, we consider mechanisms that would allocate both of the second round units to an agent who was not allocated any units in the first round if the other agent was allocated at least one. For this class of Chip-Hour mechanisms, we consider the instance where agent 1 values the first round units $\epsilon<0.5$ and both agents value the second round units $1$. In this case, although agent 1 has a positive value for the two first-round units, he has a higher value for a second round unit, so they would prefer to use no credits in the first round, as this would lose them one (more valuable) unit in the second round. Therefore, the strategic choice of agent 1 would be to report a demand $d_{1}[1]=0$ for the first round and then split the two second-round units with agent 2. This leads to a value of 1 for each agent. However, note that this is not Pareto efficient since the two first-round units remain unallocated even though agent 1 has positive value for them. If we were to instead allocate those first-round units to agent $1$, this would yield a value of $1+2\epsilon$ for this agent (an up to 100\% increase), which Pareto dominates the previous allocation. 

A similar argument can prove the Pareto inefficiency of the (much less natural) class of Chip-Hour mechanisms that instead prioritize agents who have spent more of their credits. In this case, facing the same instance as above, agent 1 would report their true demand, but agent 2 would prefer to lie by requesting the units in the first round (although he has no value for them). This way, he would be allocated one of them (by symmetry) and he would ensure that he is also allocated one unit in the second round (by symmetry). However, this is clearly not Pareto efficient since one of the first-round units that agent 1 has positive value for is allocated to agent 2 who has no value for it.

Finally, we conclude with the class of Chip-Hour mechanisms that would split the second round units equally among the agents irrespective of what happened in the previous round (i.e., they are independent of how many credits each agent has left). For these mechanisms, we consider the same instance that we used for the truthful reporting case above. In this case, no agent has any incentive to misreport (since the mechanism essentially disregards history), so the Pareto inefficient outcome that arises is the same as the one in the truthful case.
\end{proof}

\subsection{Efficiency \& Fairness of Markets}

In contrast to Chip-Hour mechanisms, the market mechanism always guarantees Pareto efficiency, even for instances where the agents have general heterogeneous values. 

\begin{restatable}[Pareto Efficiency]{theorem}{efficiency}\label{thm:efficiency}
The market mechanism achieves Pareto efficiency for general heterogeneous valuations, irrespective of how much the prices may vary.
\end{restatable}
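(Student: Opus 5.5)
This is a First Welfare Theorem argument. We view the outcome of the market mechanism as a competitive equilibrium. Each round $t$ has a clearing price $p_t$. Each agent $i$ is a price-taker holding budget $B_i = T\cdot w_i$. By the large-market and foresight assumptions, agent $i$ chooses its allocation $x[i]=(x_t[i])_t$ to maximize $v_i$ subject to $\sum_t p_t x_t[i]\le B_i$. The limit prices $\bar p_t[i]$ and per-round budgets $b_t[i]$ are simply the reporting device that implements this optimal bundle. Concretely, $i$ sets $\bar p_t[i]=v_t[i]$, or chooses to receive nothing in rounds whose value-per-credit ratio is below its marginal ratio. We then suppose, for contradiction, that some feasible $x'$ Pareto-dominates the market outcome $x$, and we derive a violation of feasibility.

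The first step is the standard pair of revealed-preference inequalities. Take an agent $j$ with $v_j(x')>v_j(x)$. Since $x[j]$ is an optimal affordable bundle, $x'[j]$ must be unaffordable, so $\sum_t p_t x'_t[j] > B_j \ge \sum_t p_t x_t[j]$.

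For every other agent $i$ we have $v_i(x')\ge v_i(x)$, and we need $\sum_t p_t x'_t[i]\ge \sum_t p_t x_t[i]$. This "cheapest bundle" property is the delicate step, because the valuations $\min\{x,d\}v$ are linear with caps and so are not locally nonsatiated. An agent might already have leftover budget while holding all of its demand. The approach is as follows:
\begin{itemize}
\item We trim $x'$ first. Without loss of generality $x'_t[i]\le d_t[i]$, and $x'_t[i]=0$ whenever $v_t[i]=0$. Trimming does not change any agent's value and keeps $x'$ feasible.
\item We then argue that $x[i]$ is a cost-minimizing bundle among those that achieve value $v_i(x)$. If some cheaper bundle reached the same value, $i$ would have slack money. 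With that slack, $i$ could buy more in some round $t$ where $v_t[i]>0$, $x_t[i]<d_t[i]$ and $p_t<\infty$, gaining strictly. This contradicts optimality unless $i$ is fully satiated.
\item In the satiated case, $x[i]$ already gives $d_t[i]$ units in every round with positive value. Then the trimmed $x'[i]$ satisfies $x'_t[i]\le x_t[i]$ pointwise, which weakens the desired inequality to $\sum_t p_t x'_t[i]\le\sum_t p_t x_t[i]$. The fix is to transfer agent $i$'s excess from $x'$ into $x'[j]$'s rounds.
\end{itemize}
A cleaner alternative is to use the inequality only in its summed form, as below.

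Summing over agents gives $\sum_t p_t\sum_i x'_t[i] > \sum_t p_t\sum_i x_t[i]$. Market clearing now closes the argument:
\begin{itemize}
\item In every round with $p_t>0$, the scarcity assumption $\sum_i d_t[i]\ge s$ together with the definition of the clearing price implies that all $s$ units are allocated, so $\sum_i x_t[i]=s$.
\item Rounds with $p_t=0$ contribute zero to both sides.
\end{itemize}
Hence the right-hand side equals $\sum_t p_t s$, and the left-hand side can exceed it only if $\sum_i x'_t[i]>s$ for some $t$. This contradicts the feasibility of $x'$. The argument never uses how large or variable the prices $p_t$ are, which matches the clause "irrespective of how much the prices may vary."

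The main obstacle is the second step: making "demand-maximizing agents" rigorous when utilities are capped, and handling agents with unspent budget. The key subtlety is that a Pareto improvement could give zero-value or over-demand units to some agent. I would handle this with the trimming reduction above. Separately, I need to check that an unspent budget can coexist only with full satiation in all rounds with positive value. If neither of these closes cleanly, I would fall back on a direct welfare-LP argument. In that approach, the market allocation maximizes $\sum_i \lambda_i v_i$ with weights $\lambda_i$ equal to the inverse of agent $i$'s marginal bang-per-buck. Any weighted-welfare maximizer with positive weights is Pareto efficient. Satiated agents would receive a small positive weight, chosen after verifying that their holdings are already at their caps.
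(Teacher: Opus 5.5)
Your proposal is the paper's proof: a First Welfare Theorem argument in which the strictly better-off agent's bundle must cost more at prices $p$, every other agent's bundle must cost weakly more, and summing over agents contradicts market clearing. Your use of $\sum_i x'_t[i]\le s$ together with $p_t\ge 0$ is slightly more careful than the paper, which writes this as an equality. The paper simply assumes that no agent receives everything it values, and your ``satiated'' worry dissolves without the transfer fix or the LP fallback: after trimming, $v_i(x')\ge v_i(x)$ forces $x'_t[i]=d_t[i]=x_t[i]$ on every round with $v_t[i]>0$, so the weak inequality holds with equality as long as $x[i]$ buys nothing at a positive price in zero-value rounds, which your limit-price rule guarantees (and in the fallback such an agent would need a large weight, at least $\max_{t:\,v_t[i]>0} p_t/v_t[i]$, not a small one).
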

\begin{proof}
Given some instance, let $p$ be the market clearing prices computed by the market mechanism and $x$ be the corresponding allocation that it yields. Assume that the allocation $x$ returned by the market mechanism is Pareto dominated by some other allocation $x'$, i.e., there exists some agent $i$ who strictly prefers their allocation in $x'$ over the one in $x$. This must mean that the total price, according to $p$, of the units that $i$ is allocated in $x'$ must be greater than the units allocated to $i$ in $x$, i.e., $\sum_{t\in[T]}x'_t[i]p_t > \sum_{t\in[T]}x_t[i]p_t$, otherwise $i$ could just buy them and strictly increase their value (this is due to the common assumption that $i$ spends their budget optimally given the price vector $p$). Also, since $x'$ Pareto dominates $x$, it must be the case that every other agent $j\neq i$ is at least as happy in $x'$. Similarly, this implies that $\sum_{t\in[T]}x'_t[j]p_t \geq \sum_{t\in[T]}x_t[j]p_t$, otherwise $j$ must be spending their budget sub-optimally in $x$; specifically, they could instead buy their $x'$ units by spending less than their budget and then spend the remaining budget on some additional units and strictly increase their value. This uses the common assumption that no agent receives everything that they have positive value for, so there are always additional (fractions of) units that they can buy. Summing the aforementioned inequalities over all agents $k$ (both $i$ and $j\neq i$), we get
\begin{align*}
\sum_{k\in N}\sum_{t\in[T]}x'_t[k]p_t &> \sum_{k\in N}\sum_{t\in[T]}x_t[k]p_t\\
\sum_{t\in[T]}\sum_{k\in N}x'_t[k]p_t &> \sum_{t\in[T]}\sum_{k\in N}x_t[k]p_t\\
\sum_{t\in[T]}s\cdot p_t &> \sum_{t\in[T]}s\cdot p_t,
\end{align*}
which is a contradiction.
This result can also be seen as an implication of the First Welfare Theorem.
\end{proof}

Apart from Pareto efficiency, we also show that the allocations induced by the market mechanism are (approximately) fair. Specifically, 
one of the things that we observed though the deployment of the market mechanism is that although the prices of the units can vary significantly over time, the average price that each agent pays per unit in the long run does not vary as much. One way to interpret this is that there is no reason why any one agent's requests may be strongly correlated with higher prices. If we let $\tilde{p}$ denote the average price per unit across the $T$ rounds and let $f_i \cdot \tilde{p}$ be the average price paid by agent $i$ over this time horizon, then we can bound the fairness of the induced allocation using the $f_i$ values. Specifically, the resulting allocation is $\min_{i\in N}\frac{1}{f_i}$-fair, converging to max-min fairness as the $f_i$ values converge to $1$. Note that this is in stark contrast to Chip-Hour mechanisms which cannot even achieve better than $1/n$-fairness when $f_i=1$ (see the instance used in the proof of Theorem~\ref{thm:imbalance}).

\begin{restatable}[Fairness]{theorem}{balanced}\label{thm:balance}
Let $\tilde{p}$ be the average price of a resource over time. If the average price paid by some agent $i$ is at most $f_i \cdot \tilde{p}$, then the resulting allocation is $\min_{i\in N}\frac{1}{f_i}$-fair, even for general heterogeneous valuations.
\end{restatable}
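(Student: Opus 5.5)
The argument is a budget-accounting one: relate each agent's total spending to the total money in the market, then divide by the agent's average price. Two facts about the market equilibrium are needed.

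First, total revenue. Define $\tilde p = \frac{1}{T}\sum_{t\in[T]} p_t$. Under the scarcity assumption $\sum_i d_t[i]\geq s$, each round clears with all $s$ units sold at price $p_t$. So the total revenue over the horizon is $s\sum_t p_t = sT\tilde p$.

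Second, each agent spends its whole budget. If agent $i$ had leftover credits, it could buy additional (fractions of) units it values positively. This uses the same assumption as in the proof of Theorem~\ref{thm:efficiency}: no agent receives everything it values, and agents spend optimally given prices. Hence agent $i$ spends exactly $T w_i$.

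Summing the second fact over agents gives total spending $T\sum_i w_i = Ts$. Equating with the first fact gives $sT\tilde p = Ts$, so $\tilde p = 1$. This is the normalization that makes the budgets meaningful in units of the resource.

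Now let $X_i=\sum_t x_t[i]$ be agent $i$'s total allocation. By hypothesis its average price paid per unit, $\frac{1}{X_i}\sum_t p_t x_t[i]$, is at most $f_i\tilde p = f_i$. Its spending is exactly $Tw_i$, so
\[
Tw_i = \sum_t p_t x_t[i] \le f_i X_i .
\]
Therefore
\[
X_i \ge \frac{1}{f_i} T w_i = \frac{1}{f_i}\cdot \frac{w_i}{\sum_j w_j}\cdot T s,
\]
using $\sum_j w_j = s$. Taking the minimum over $i$ gives $\min_i \frac{1}{f_i}$-fairness. Valuations never enter except through full budget exhaustion, so the bound holds for general heterogeneous values.

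The main obstacle is justifying exact budget exhaustion and full clearing, i.e.\ ruling out zero-price rounds and agents that hoard credits. If a round clears at price $0$, it contributes nothing to revenue, and the identity $\tilde p=1$ could fail. I would first try to handle this through the large-market and scarcity assumptions, which give positive prices whenever demand weakly exceeds supply. Failing that, I would only need inequalities in a consistent direction. Using spending $\le Tw_i$ together with revenue $= sT\tilde p$ gives $\tilde p\le 1$, which goes the wrong way. So the proof genuinely needs agents to spend everything, and I would state this explicitly as the optimal-spending assumption already invoked for Theorem~\ref{thm:efficiency}.
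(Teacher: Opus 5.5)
Your proof is correct and follows the paper's argument. Total credits $Ts$ over $Ts$ sold units give $\tilde p = 1$, and agent $i$'s budget $Tw_i$ at an average price of at most $f_i$ buys at least $Tw_i/f_i$ units; the paper leaves the full-clearing and budget-exhaustion assumptions implicit, whereas you state them. One correction to your closing remark: $\tilde p \le 1$ is the helpful direction, since it gives $X_i \ge S_i/(f_i\tilde p) \ge S_i/f_i$ for agent $i$'s spending $S_i$, so beyond full clearing you only need agent $i$ itself to exhaust its budget, and zero-price rounds do no harm.
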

\begin{proof}
Since each agent $i$ is allotted a budget of $T\cdot w_i$ credits, the total number of allotted  credits is $T\cdot s$. Also, the total number of units allocated per round is $s$, so $T\cdot s$ units allocated overall. Therefore, the average price per unit is $\tilde{p}=1$, and the average price paid by any agent $i$ is at most $f_i$. Also, since agent $i$ has a total budget of $w_i \cdot T$, this means that the number of units that they were able to buy using this budget is at least $\frac{w_i T}{f_i}$ overall, which is a $\frac{w_i}{s\cdot f_i }$ fraction of all available units. This implies that every agent receives at least a $\min_{i\in N}\frac{1}{f_i}\cdot \frac{w_i}{\sum_{i\in N}w_i}$ fraction of the $T\cdot s$ available units, so the resulting allocation is $\min_{i\in N}\frac{1}{f_i}$ -fair.  \end{proof}


%% file: future_research.tex
\section{Discussions and Future Directions} \label{sec:challenges}

While the system is deployed in Google to serve thousands of users and manage resources at scale, there remain significant improvements to be explored in maximizing efficiency, flexibility, and user experience. We outline here some unresolved challenges, and  opportunities being currently explored.

\paragraph{Shaping User Behavior.}
Users were shown to respond to market prices which opens the opportunity to use prices as a vehicle for driving other organizational goals. One example is carbon-efficiency: prices can be adjusted to incentivize scheduling in more carbon-efficient datacenters or use more carbon-efficient configurations. Similarly, the market can be changed to provide discounts (or penalties) guiding towards desired behavior. For example, workloads that are well configured to make good use of the hardware, don't crash, save checkpoints can be rewarded. These mechanisms can also be used to drive adoption for workload improvements that require additional user effort or opt-in.

Currently, our system simply balances supply and demand without taking many other factors into account, but the existence of prices creates the possibility of shaping the behavior of users using economics rather than mandates.

\paragraph{Reducing Failure Blast Radius.}
Centralized resource allocation with the QM improves efficiency but also creates a single point of failure. One way to reduce the impact of a bad rollout or bug in the code is to shard the system into multiple deployments, each managing a subsection of the fleet. There are many ways to consider sharding which offer tradeoffs in efficiency and user experience. One approach is to thin out the necessary global component while transferring as much logic as possible to sharded replicas. We can also imagine separating parts the system by location or resource type.

\paragraph{Potential for Gaming.}



Our system operates within Google, where users and teams prioritize their own resource needs but are policy constrained to not intentionally disrupt others. We now briefly discuss a few potential gaming strategies in general adversarial environments. 

\paragraph{Gaming: Currency Conversion.}
When designing QM, we made the conscious decision of letting each business unit mint their own currency. All the business-unit-local credits get converted to a global currency when the market clears. The local-to-global currency exchange rate is a function of both the number of credits minted by that business unit, and the market weight assigned to that business unit by the company. This freedom to mint credits has ergonomic benefits: for example, credits minted per hour, can be maintained in proportion to the number of engineers in the team. However, this freedom also comes with an ability to game the system.  Consider a scenario where a team mints a lot of credits per hour for a while, accumulates a substantial number of credits in its savings, and then dramatically reduces the rate at which it mints credits per hour. The newly minted credits are worth much more because the currency's value had significantly risen. But if the system were to value the accumulated old credits also at the same level as the new credits, the team would have successfully managed to mint credits out of thin air. If we value these credits differently, we introduce complexity.

An inelegant solution for this problem is to declare that each credit comes with the exchange rate at which it was minted, and therefore different credits are worth differently to a team. This creates too much complexity and the meaning of a credit is lost when different credits are worth differently. Another solution with jarring optics is to say that the bank accounts of teams will get depleted by a certain amount (that depends on how much the business unit decided to raise its currency's value), thereby maintaining the same value for all the credits, old and new. But a team's credits vanishing for no choice of theirs (income rate decisions are made by the head of the business unit) is a recipe for many escalations and confusions. Designing good mechanisms that balance simplicity+usability with robustness to manipulations is a useful problem to solve. We consider this as a UX problem and indeed as~\cite{SNPASVC05} notes, the interface has the biggest influence on user perception.

\paragraph{Gaming: Adversarially Depleting Other Teams' Credits.}
Teams bid for demanded resources, but are only charged credits for jobs that run. A bad actor could launch a job which is unlikely to schedule and is set to immediately fail if it does. For example, this job may require all the resources of a specific type in a specific location to start. The bad actor is thus able to inflate prices for all other market participants without paying any credits to do so. 

One approach to disincentivize this behavior is to charge for resource allocations, not usage. However, potentially conflicting scheduling outcomes later in the stack can lead to significant charges for teams that did not get to run their workloads. This becomes less probable as the market becomes more deeply integrated with the scheduler.

Another approach is to use the Vickrey-Clarke-Groves (VCG) mechanism. This would enable the system to price each workload's resources based on the externality it imposes. In other words, each workload's cost would be the difference between the value of the other workloads that would have run in this workload's absence and the value of the other workloads that ran in this workload's presence. A small job which has no measurable effect on the system would be more or less free. A large or hard-to-schedule job, like the one described above, would not be directly displaced by any single workload, so it would not affect their prices. But VCG has its own complications, like having to price each workload differently, as opposed to uniformly pricing every workload that ran at any given point of time at a given location or cell. I.e., VCG's pricing is complicated for users to understand, as opposed to the simple price-per-cell concept that the market guarantees.



%% file: related.tex
\section{Related Work}\label{sec:related}
Systems literature on resource allocation mechanisms is rich and vast.
Here, we focus on a few related streams of work. To the best of our knowledge, ours is the first work that (a) guarantees Pareto efficiency and (approximate) max-min fairness in the presence of dynamic demand with heterogeneous values, and (b) has designed, implemented, and deployed a market-based mechanism for system resources at this scale.

Market-based mechanisms for pricing computer resources is an old idea~\cite{Diamond1968,Sutherland1968,Corbato}. However, pricing for shared resource allocation was not a first-order concern in time-shared systems, leading to this research being short-lived. There have been other works that took a market approach to resource allocation~\cite{Waldspurger1992,Tanenbaum1986}. Two decades ago, when~\cite{SNPASVC05} described why the time was right to revisit market-based solution for systems resource allocation, they cite the demand-supply gap as the primary reason. This has never been any more true than it is today. Therefore, research on market-based ideas for resource allocation is indeed timely. Documented experiences of markets allocating resources in a wide variety of settings are valuable, as they serve as catalysts for wide-spread adoption of this powerful tool. With this motivation, we go over how we addressed some of the markets-systems integration challenges that were raised in~\cite{SNPASVC05}.

\paragraph{Allocation Policy Must be Explicit.}  The main challenge identified is that of landing on an allocation policy that all the users can agree on: whether to maximize allocation efficiency, or to favor jobs from under-represented users etc. In the environment we operate in (a large company) all players are in agreement that they want to maximize the value to the company. Disagreements about what is more valuable can be resolved top-down following the organizational structure -- which mirrors the way credits are distributed.


\paragraph{Calculating and Expressing Valuation.} Despite the significant influence of bidding interfaces on user perception of market mechanisms, this area remains under-researched. Our system allows users to express workload value via parameters specifying job credits. While finer-grained controls are possible, the system must balance expressivity with interface simplicity, as~\citet{SNPASVC05} note that ease of use often surpasses the importance of complex features.

\paragraph{Well-Defined Currency.} \citet{SNPASVC05} also note that virtual currencies have a lot of issues, particularly that they are not exchangeable outside the system. While our credits too cannot be used for buying anything outside the system, we note that business units (not individual teams) are charged back for the market power they are assigned. I.e., business units pay in real dollars for getting credits, and indeed they could have used those dollars towards headcount, other equipment etc. Therefore, credits in QM have a tangible dollar value. Designing a good currency also includes creating mechanisms to control inflation and volatility in prices. QM has guardrails like bid upper bounds to ensure that teams are unable to hoard credits and flood the market with credits.

\paragraph{Pricing-Based Resource Allocation.} Spot-instance marketplace and virtual machine auctions~\cite{amazon_ec2_spot,benyehuda2014ginseng,funaro2016ginseng,wolski2017probabilistic,zheng2016viability,zheng2015how} are examples of pricing-based resource allocation. Nevertheless, fair resource allocation is not their goal, and even in rare cases where fairness was the goal~\cite{wang2015xchange}, it is qualitatively different from the long-term fairness that we focus on~\cite{VFACKT23}. Beyond resource allocation, other places where credits have been used in game-theoretic contexts include~\cite{feigenbaum2004distributed,cox2003samsara,piatek2007do,yadgar2013cooperative}.

\paragraph{Max-min Fairness and Generalizations.} Being able to satisfy Pareto-efficiency and fairness has inspired a lot of work on this topic~\cite{DRF11,grandl2016altruistic,hong2013achieving,jain2013b4,narayanan2021solving,narayanan2020heterogeneityaware, popa2012faircloud, pu2016fairride, shue2012performance}, including generalizations to multiple resources~\cite{DRF11,grandl2014multiresource,grandl2016altruistic}. As described earlier, the efficiency guarantees of these mechanisms crucially rely on the demand being static; the recent Karma mechanism addresses this exact problem~\cite{VFACKT23}. In this paper, we further this line of work by including heterogeneous values for demand.

%% file: acknowledgements.tex
\section*{Acknowledgments}
We sincerely thank the reviewers for their insightful feedback, and the shepherd for their support with the revision process. 